\frenchspacing  \setlength{\pdfpagewidth}{8.5in}  \setlength{\pdfpageheight}{11in}  
\newtheorem{theorem}{Theorem}
\newtheorem{corollary}[theorem]{Corollary}
\newtheorem{lemma}[theorem]{Lemma}
\theoremstyle{definition}
\newtheorem{definition}[theorem]{Definition}
\newtheorem{example}[theorem]{Example}
\newcommand{\defemph}[1]{\emph{#1}}
\newcommand{\FT}[1]{{#1}}
\newcommand{\citet}[1]{\citeauthor{#1}~\shortcite{#1}}
\newcommand{\citep}{\cite}
\newcommand\powerset[1]{\ensuremath{2^{#1}}}
\def\wrt{w.r.t.}
\renewcommand{\phi}{\varphi} \newcommand{\card}[1]{\vert #1 \vert}
\renewcommand{\emptyset}{\varnothing}
\newcommand{\pgeq}{\succeq} \newcommand{\pg}{\succ}
\newcommand{\pleq}{\preceq}
\newcommand{\pl}{\prec}
\newcommand{\case}[2]{\mbox{\ensuremath{(#1, #2)}}}
\newcommand{\caseset}[2][]{\case{\{#1\}}{#2}}
\newcommand{\defcharac}{\delta_C}
\newcommand{\defoutcome}{\delta_o}
\newcommand{\nondefoutcome}{\bar{\defoutcome}}
\newcommand{\defcase}{\case{\defcharac}{\defoutcome}}
\newcommand{\defcaseset}{\case{\emptyset}{-}}
\newcommand{\newcasearg}[1][N]{\case{{#1}_C}{?}}
\newcommand{\newcasecharac}{{N_C}}
\newcommand{\casei}{\alpha}     \newcommand{\caseii}{\beta}     \newcommand{\caseiii}{\gamma}
\newcommand{\caseiv}{\eta}
\newcommand{\casev}{\theta}
\newcommand{\charac}[1]{{#1}_C}
\newcommand{\outcome}[1]{{#1}_o}
\newcommand{\fullcase}[1]{\case{\charac{#1}}{\outcome{#1}}}
\tikzset{attack/.style={-latex}}
\newcommand\AF[2]{\case{#1}{#2}}
\def\myAF{\ensuremath{(\Args,\attacks)}}
\def\myAFalone{\ensuremath{(\Args',\attacks')}}
\def\Args{\ensuremath{\mathit{Args}}}
\def\attacks{\ensuremath{\leadsto}}
\def\groundext{\mathbb{G}}
\def\arga{\ensuremath{\alpha}}
\def\argb{\ensuremath{\beta}}
\def\argc{\ensuremath{\gamma}}
\def\coherent{coherent}
\newcommand{\wellbehaved}{regular} \newcommand{\concision}{requirement 3 in the second bullet of Definition~\ref{def:dear-aacbr-def}}
\def\aaD{\ensuremath{AF_{\pgeq}(D)}} \def\aaDN{\ensuremath{AF_{\pgeq}(D,\newcasecharac)}}
 \def\caaDN{\ensuremath{cAF_{\pgeq}(D,\newcasecharac)}}
\newcommand{\aaFtwo}[2]{\mbox{\ensuremath{AF_{\pgeq}(#1, #2)}}} \newcommand{\aaFone}[1]{\mbox{\ensuremath{AF_{\pgeq}(#1)}}}
\mathchardef\mhyphen="2D
\newcommand{\aacbr}{\ensuremath{{AA\mhyphen CBR}}} \newcommand{\AACBR}{\aacbr} 
  \newcommand{\paacbr}{\ensuremath{\aacbr_{\pgeq}}} 
\newcommand{\pAACBR}{\paacbr}
\newcommand{\PAACBR}{\paacbr}
\newcommand{\oaacbr}{\ensuremath{\aacbr_{\supseteq}}} \newcommand{\dear}{\ensuremath{\aacbr_{{\pgeq},{\not\sim},{\defcharac}}}} \newcommand{\caacbr}{\ensuremath{c{\paacbr}}} \newcommand{\cAACBR}{\caacbr}
\newcommand{\inferspaacbr}{\vdash_{\paacbr}}
\newcommand{\cinfers}{\vdash_{\learn}}
\newcommand{\lang}{\mathcal{L}}
\newcommand{\learn}{\mathbb{C}}
\newcommand{\infers}{\vdash}
\author{Guilherme Paulino-Passos$^1$\and
  Francesca Toni$^1$ \\
  \affiliations
  $^1$Imperial College London, Department of Computing\\
  \emails
  \{g.passos18, f.toni\}@imperial.ac.uk
}
\date{}
\title{Cautious Monotonicity in Case-Based Reasoning with Abstract Argumentation}
\begin{document}

\maketitle
\begin{abstract}
	Recently, abstract argumentation-based models of case-based reasoning ($\aacbr$ in short) have been proposed, originally inspired by the legal domain, but also applicable as classifiers in different scenarios, including image classification, sentiment analysis of text, and in predicting the passage of bills in the UK Parliament. However, the formal properties of $\aacbr$ as a reasoning system remain largely unexplored. In this paper, we focus on analysing the non-monotonicity properties of a \emph{\wellbehaved} version of $\aacbr$ (that we call $\paacbr$). Specifically, we prove that $\paacbr$ is not cautiously monotonic, a property frequently considered desirable in the literature of non-monotonic reasoning. We then define a variation of $\paacbr$ which is cautiously monotonic, and provide an algorithm for obtaining it. Further, we prove that such variation is equivalent to using $\paacbr$ {with a restricted casebase consisting of all}  ``surprising'' cases {in the original casebase}.
\end{abstract}

\section{Introduction}
\label{sec:orgb98c4b7}

\emph{Case-based reasoning (CBR)} relies upon known solutions for problems (past cases) to infer solutions for unseen problems (new cases), based upon retrieving past cases which are ``similar'' to the new cases. 
It  is widely used in legal settings (e.g. see \cite{trevor,DBLP:conf/kr/CyrasST16}), for classification (e.g. via the k-NN algorithm) and, more recently, within the DEAr methodology \cite{dear-2020}) and for explanation (e.g. see  
\cite{DBLP:journals/air/NugentC05,DBLP:conf/ijcai/KennyK19,dear-2020}).

In this paper we focus on a recent approach to CBR based upon an argumentative reading of (past and new) cases  
\cite{DBLP:conf/kr/CyrasST16,DBLP:conf/comma/CyrasST16,Cocarascu:2018,DBLP:journals/eswa/CyrasBGTDTGH19,dear-2020}, and using 
\emph{Abstract Argumentation (AA)} \cite{Dung:95} as the underpinning machinery.
{In this paper, we will refer to all proposed incarnations of this approach in the literature generically as \AACBR\ {(the acronym used in the original paper \cite{DBLP:conf/kr/CyrasST16})}: they all}  generate an AA framework from a CBR problem, with attacks from ``more specific'' past cases to ``less specific'' past cases or to a ``default argument'' (embedding a sort of bias), and attacks from new cases to ''irrelevant'' past cases; then, they all reduce CBR to membership of the ``default argument'' in the grounded extension \cite{Dung:95}, and use fragments of the AA framework for explanation (e.g. dispute trees as in \cite{DBLP:conf/comma/CyrasST16,dear-2020} or  excess features in \cite{DBLP:journals/eswa/CyrasBGTDTGH19}).
{Different incarnations of \aacbr\  use different mechanisms for defining ``specificity'', ''irrelevance'' and ''default argument'': the original version in \cite{DBLP:conf/kr/CyrasST16} defines all three notions in terms of $\supseteq$ (and is thus referred to in this paper as \oaacbr){; thus, \oaacbr\ is applicable only to cases characterised by sets of features}; the version used for classification in \cite{dear-2020} defines ``specificity'' in terms of a generic partial order $\pgeq$,  ''irrelevance'' in terms of a generic relation $\not\sim$ and ''default argument'' in terms of a generic characterisation $\defcharac$ (and is thus referred to in this paper as \dear){. Thus,} {\dear\ is in principle applicable to cases characterised in any way, as sets of features or unstructured \cite{dear-2020}}. 
}Here 
we will study a special,  \emph{\wellbehaved} instance of \dear\ (which we refer to as $\paacbr$) in which ``irrelevance'' and the ''default argument'' are both defined in terms of  ``specificity'' (and in particular the ``default argument'' is defined in terms of the ``most specific'' case). \paacbr\ admits \oaacbr\ as an instance, obtained by choosing $\pgeq=\supseteq$ and by restricting attention to ``\coherent'' casebases 
{(whereby there is no ''noise'', in that no two cases with different outcomes are characterised by the same set of features)}.

\aacbr\ was originally inspired by the legal domain in \cite{DBLP:conf/kr/CyrasST16}, but {some incarnations of \aacbr, integrating dynamic features, have proven useful in predicting and explaining} the passage of bills in the UK Parliament \cite{DBLP:journals/eswa/CyrasBGTDTGH19}, and 
some instances of \dear\ have also shown to be fruitfully applicable as classifiers in a number of scenarios, including classification with categorical data, with images and for sentiment analysis of text \cite{dear-2020}.

In this paper we study \emph{non-monotonicity} properties of \paacbr\ understood at the same time as a reasoning system and as a classifier. 
These properties, typically considered for logical systems,  intuitively characterise in which sense systems may stop inferring some conclusions when more information is made available to them  \cite{generalpatterns}.
These properties are thus related to modelling inference which is tentative\ {and} defeasible, as opposed to the indefeasible form of inference of classical logic. Non-monotonicity properties have already been studied in argumentation systems, such as ABA and ABA+ \cite{DBLP:conf/tafa/CyrasT15,DBLP:journals/corr/CyrasT16}, $ASPIC^{+}$ \cite{DBLP:conf/ecai/Dung14,Dung_2016} and logic-based argumentation systems \cite{DBLP:conf/comma/Hunter10}. {In this paper, we study those properties for the application of argumentation to classification, in particular in the form of \aacbr.}

The following example illustrates {\AACBR\ (and \oaacbr\ in particular) as well as its} non-monotonicity, in a legal setting.

\begin{figure}[t!hb]
  \centering
   \begin{tikzpicture}[>=latex,line join=bevel,]
     \pgfsetlinewidth{1bp}
\pgfsetcolor{black}
\draw [->] (25.0bp,25.343bp) .. controls (25.0bp,28.924bp) and (25.0bp,32.924bp)  .. (25.0bp,46.997bp);
\begin{scope}
     \definecolor{strokecol}{rgb}{0.0,0.0,0.0};
     \pgfsetstrokecolor{strokecol}
     \definecolor{fillcol}{rgb}{0.83,0.83,0.83};
     \pgfsetfillcolor{fillcol}
     \filldraw [thin] (165.0bp,12.5bp) -- (147.0bp,25.0bp) -- (111.0bp,25.0bp) -- (93.0bp,12.5bp) -- (111.0bp,0.0bp) -- (147.0bp,0.0bp) -- cycle;
     \draw (129.0bp,12.5bp) node {$\caseset[hm,sd]{?}$};
   \end{scope}
\begin{scope}
     \definecolor{strokecol}{rgb}{0.0,0.0,0.0};
     \pgfsetstrokecolor{strokecol}
     \definecolor{fillcol}{rgb}{0.83,0.83,0.83};
     \pgfsetfillcolor{fillcol}
     \filldraw [opacity=1] [thin] (25.0bp,12.5bp) ellipse (25.0bp and 12.5bp);
     \draw (25.0bp,12.5bp) node {$\caseset[hm]{+}$};
   \end{scope}
\begin{scope}
     \definecolor{strokecol}{rgb}{0.0,0.0,0.0};
     \pgfsetstrokecolor{strokecol}
     \draw [thin] (25.0bp,59.5bp) ellipse (18.0bp and 12.5bp);
     \draw (25.0bp,59.5bp) node {$\defcaseset$};
   \end{scope}
\end{tikzpicture}
	\caption{{Initial AA} framework for Example \ref{running-example}. Past cases {(with their outcomes)} {and} the new case {(with no outcome, indicated by a question mark)} are represented as arguments. {\AACBR\ predicts outcome $+$ for the new case}.
	  (Grounded extension in colour.)
        }
  \label{fig:example-legal-1}
\end{figure}

\begin{figure}[t!hb]
  \centering
   \begin{tikzpicture}[>=latex,line join=bevel,]
     \pgfsetlinewidth{1bp}
\pgfsetcolor{black}
\draw [->] (32.5bp,72.343bp) .. controls (32.5bp,75.924bp) and (32.5bp,79.924bp)  .. (32.5bp,93.997bp);
\draw [->] (32.5bp,25.343bp) .. controls (32.5bp,28.924bp) and (32.5bp,32.924bp)  .. (32.5bp,46.997bp);
\begin{scope}
     \definecolor{strokecol}{rgb}{0.0,0.0,0.0};
     \pgfsetstrokecolor{strokecol}
     \definecolor{fillcol}{rgb}{0.83,0.83,0.83};
     \pgfsetfillcolor{fillcol}
     \filldraw [thin] (180.5bp,12.5bp) -- (162.5bp,25.0bp) -- (126.5bp,25.0bp) -- (108.5bp,12.5bp) -- (126.5bp,0.0bp) -- (162.5bp,0.0bp) -- cycle;
     \draw (144.5bp,12.5bp) node {$\caseset[hm,sd]{?}$};
   \end{scope}
\begin{scope}
     \definecolor{strokecol}{rgb}{0.0,0.0,0.0};
     \pgfsetstrokecolor{strokecol}
     \definecolor{fillcol}{rgb}{0.83,0.83,0.83};
     \pgfsetfillcolor{fillcol}
     \filldraw [opacity=1] [thin] (32.5bp,12.5bp) ellipse (32.5bp and 12.5bp);
     \draw (32.5bp,12.5bp) node {$\caseset[hm,sd]{-}$};
   \end{scope}
\begin{scope}
     \definecolor{strokecol}{rgb}{0.0,0.0,0.0};
     \pgfsetstrokecolor{strokecol}
     \draw [thin] (32.5bp,59.5bp) ellipse (25.0bp and 12.5bp);
     \draw (32.5bp,59.5bp) node {$\caseset[hm]{+}$};
   \end{scope}
\begin{scope}
     \definecolor{strokecol}{rgb}{0.0,0.0,0.0};
     \pgfsetstrokecolor{strokecol}
     \definecolor{fillcol}{rgb}{0.83,0.83,0.83};
     \pgfsetfillcolor{fillcol}
     \filldraw [opacity=1] [thin] (32.5bp,106.5bp) ellipse (18.0bp and 12.5bp);
     \draw (32.5bp,106.5bp) node {$\defcaseset$};
   \end{scope}
\end{tikzpicture}
	\caption{{Revised AA} framework for Example \ref{running-example}. Here, the added past case changes the {\AACBR-predicted outcome to $-$} by limiting the applicability of the previous past case.
	{(Again, grounded extension in colour.)}}
  \label{fig:example-legal-2}
\end{figure}

\begin{example}
\label{running-example}
Consider a simplified legal system built by cases and adhering, like most modern legal systems, to the principle by which, unless proven otherwise, no person is to be considered guilty of a crime. This can be represented by a ``default argument'' \(\defcaseset\), indicating that, in the absence of any information about any person, the legal system should infer a negative outcome  $-$ (that the person is \emph{not} guilty). {\(\defcaseset\) can be understood as an argument, in the AA sense, given that it} is merely what is called a relative presumption, since it is open to proof to the contrary{, e.g.} by proving that the person did indeed commit a crime. Let us consider here one possible crime: homicide\footnote{This is merely a hypothetical example, so the terms used do not correspond to a specific jurisdiction.} (\emph{hm}). In one case, it was established that the defendant committed homicide, and he was considered guilty, represented as \(\caseset[hm]{+}\).
{Consider now a new case \(\caseset[hm,sd]{?}\){, with an unknown outcome,} of a defendant who committed homicide, but for which it was proven that it was in self-defence (\emph{sd}). {In order to predict the new case's} outcome by CBR, $\aacbr$ reduces {the} prediction problem to that of membership of the default argument in the grounded extension $\groundext$~\cite{Dung:95} of the AA framework} 
	in Figure \ref{fig:example-legal-1}: given that \(\defcaseset\not\in\groundext\), the predicted outcome is positive (i.e. guilty), disregarding $sd$ and, indeed, no matter what other feature this case may have. Thus, up to this point, having the feature $hm$ is a sufficient condition for {predicting} guilty.
If, however, the courts decides that for this new case the defendant should be acquitted, the case \(\caseset[hm,sd]{-}\) enters in our casebase. Now, having the feature $hm$ is {no longer} a sufficient condition for {predicting} guilty, {and} any case {with} both $hm$ and $sd$ will {be predicted} a negative  outcome (i.e. {that the person is} innocent). {This is the case for predicting the outcome of a new case with again  both $hm$ and $sd$, in \AACBR\ using the AA framework}  
in Figure \ref{fig:example-legal-2}.  Thus, adding a new case to the casebase removed some conclusions which were inferred from the previous, smaller casebase. This illustrates non-monotonicity.
\end{example}

In this paper we prove that the kind of inference underpinning \paacbr\ lacks a standard non-monotonicity property, namely \emph{cautious monotonicity}. Intuitively this property means that if a conclusion is added to the set of premises (here, the casebase), then no conclusion is lost, that is, everything which was inferable still is so. In terms of a {supervised} classifier, satisfying cautious monotonicity culminates in being ``closed'' under self-supervision. That is, augmenting the dataset with conclusions inferred by the classifier itself does not change the classifier.

Then, we make a two-fold contribution: we define (formally and algorithmically) a provably cautiously monotonic variant of \paacbr, that we call \caacbr, and prove that  it is equivalent to \paacbr\ applied to a restricted casebase consisting of all ``surprising'' cases in the original casebase.
We also show that the property of cautious monotonicity of \caacbr\ {leads to} the desirable properties of \emph{cumulativity} and \emph{rational monotonicity}. {All results here presented are restricted to coherent casebases, in which no case characterisation (problem) occurs with more than one outcome (solution).}

\section{Background}
\label{sec:org43bfd8b}
\subsection{Abstract argumentation}
\label{sec:orgd45d2e6}
An \emph{abstract argumentation framework (AF)} \cite{Dung:95} is a pair $\myAF$,
where $\Args$ is a set (of \emph{arguments}) 
and $\attacks$ is a binary relation on $\Args$. 
For $\arga, \argb \in \Args$, if $\arga \attacks \argb$, 
then we say that $\arga$ \emph{attacks} $\argb$
and that $\arga$ is an \emph{attacker of} $\argb$. 
For a set of arguments $E \subseteq \Args$ and an argument $\arga \in \Args$, 
$E$ \emph{defends} $\arga$ if for all $\argb \attacks \arga$ there exists $\argc \in E$ such that $\argc \attacks \argb$. 
Then, 
\label{defn:semantics} 
the \emph{grounded extension} of $\myAF$ can be constructed as 
$\groundext = \bigcup_{i \geqslant 0} G_i$, 
where $G_0$ is the set of all unattacked arguments, 
and $\forall i \geqslant 0$, $G_{i+1}$ is the set of arguments that $G_i$ defends.
For any $\myAF$, 
the grounded extension $\groundext$ 
always exists and is unique
and, if $\myAF$ is well-founded \cite{Dung:95}, extensions under other semantics (e.g. {stable extensions}~\cite{Dung:95}{, where $E \subseteq \Args$ is \emph{stable} if $\nexists \arga, \argb \in E$ such that $\arga \attacks \argb$ and, moreover, $\forall \arga \in \Args \setminus E$, $\exists \argb \in E$ such that $\argb \attacks \arga$}) are equal to $\groundext$. 
In particular for finite AFs, $\myAF$ is well-founded iff it is acyclic.

{Given $\myAF$, we will sometimes use $\arga \in \myAF$ to stand for $\arga \in \Args$.}
\subsection{Non-monotonicity properties}
\label{sec:non-monot-prop}

We will be interested in the following properties.\footnote{We are mostly following the treatment of \citet{generalpatterns}.}
An arbitrary inference relation $\infers$ (for a language including, in particular, sentences $a, b$, etc., {with negations $\neg a$ and $\neg b$, etc.}, and sets of sentences $A,B$)
is said to satisfy: 
  \begin{enumerate}
  \item {\em non-monotonicity}, iff $A \infers a$ and $A \subseteq B$ do not imply that $B \infers a$;
\item \emph{cautious monotonicity}, iff $A \infers a$ and $A \infers b$ imply that $A \cup \{a\} \infers b$;
  \item \emph{cut}, iff $A \infers a$ and $A \cup \{a\} \infers b$ imply that $A \infers b$;
  \item \emph{cumulativity}, iff $\vdash$ is both cautiously monotonic and satisfies cut;
  \item {\em rational monotonicity}, iff $A \infers a$ and $A \not\infers \neg b$ imply that $A \cup \{b\} \infers a$; \item {\em completeness}, iff either $A \infers a$ or $A \infers \neg a$.
  \end{enumerate}

\section{Setting the ground}
\label{sec:preliminaries}

\label{sec:org3dbe745}

In this section we define \paacbr, adapting definitions from \cite{dear-2020}.

All incarnations of \aacbr, including \paacbr, 
map a {\emph{database}} \(D\) of {{\em examples}} labelled with an {\em outcome} and an {\em {unlabelled example}} (for which the outcome is unknown) into an AF. {Here, the database may be understood as a {\em casebase}, the labelled examples as {\em past cases} and the unlabelled example as a {\em new case}: we will use these terminologies interchangeably throughout.}
In this paper, as in \cite{dear-2020},
{examples/}cases have a characterisation (e.g., as in \cite{DBLP:conf/kr/CyrasST16}, characterisations may be sets of features), and outcomes are chosen from two available ones, one of which is selected up-front as the \emph{default outcome}.
Finally, in the spirit of \cite{dear-2020}, we assume that the set of characterisations of (past and new) cases  is equipped with a partial order {$\pleq$} (whereby $\arga \prec \argb$ {holds if $\arga \pleq \argb$ and $\arga \neq \argb$ and} is read ``$\arga$ is less \emph{specific} than $\argb$'') and with a relation $\not \sim$  (whereby $\arga \not\sim \argb$ is read as ``$\argb$ is {\em irrelevant} to $\arga$'').
Formally: 

\begin{definition}[Adapted from \cite{dear-2020}]
  \label{dear-miner}
  Let $X$ be a set of \emph{characterisations}, equipped with a partial order $\pl$ and a binary relation $\not\sim$.  Let $Y = \{\defoutcome,\nondefoutcome\}$ be the set of (all possible) \emph{outcomes}, with $\defoutcome$ the {\em default outcome}.  
Then, a {\em casebase} $D$ is a finite set such that  $D \subseteq X \times Y$
	(thus a {\em past case} $\alpha\in D$   is of the form $\case{\alpha_{C}}{\alpha_{o}}$ for $\alpha_{C}\in X$ and $\alpha_{o}\in Y$)
        and a {\em new case}  is of the form $\newcasearg$  for $\newcasecharac \in X$.
        {We also discriminate a particular element $\defcharac \in X$ and define the \emph{default argument} $\defcase \in X \times Y$.}

A casebase $D$ is {\em \coherent} if there are no two cases  $\case{\alpha_{C}}{\alpha_{o}},\case{\beta_{C}}{\beta_{o}}\in D$ such that $\alpha_{C} = \beta_{C}$ but $\alpha_{o} \neq \beta_{o}$
. 
\end{definition}

For simplicity of notation, we sometimes extend the definition of $\pgeq$ to $X \times Y$, by setting $\case{\alpha_c}{\alpha_o} \pgeq \case{\beta_c}{\beta_o}$ iff $\alpha_c \pgeq \beta_c$.\footnote{{In \cite{dear-2020} $\pgeq$ was directly given over $X\times Y$. Note that, in} $X \times Y$, anti-symmetry may fail for two cases with different outcomes but the same characterisation, if $D$ is not \coherent, and thus $\pgeq$ is merely a preorder on $X \times Y$. When we are restricted to a \coherent\ $D$, we can guarantee it is a partial order.} 

\begin{definition} [Adapted from \cite{dear-2020}] \label{def:dear-aacbr-def}
  The \emph{AF mined from a dataset $D$ and a new case $\newcasearg$} is $\myAF$,
  in which:
  \begin{itemize}
  \item $\Args=  D \cup \{\defcase\} \cup \{\newcasearg\}$ ;
    \todo[inline]{Isn't it a bit strange to use $\case{C_\delta}{\delta_o}$? it seems non-symmetric, I'd either keep $\case{\delta_C}{\delta_o}$ or change to $\case{C_\delta}{o_\delta}$}
  \item  for $(\alpha_C, \alpha_o), (\beta_C, \beta_o) \in D \cup \{ \defcase \}$, it holds that $(\alpha_C, \alpha_o) \attacks (\beta_C, \beta_o)$ iff

    \begin{enumerate}
\item $\alpha_o \neq \beta_o$,

    \item {$\alpha_C  \pgeq \beta_C$, and}
      
    \item {$\nexists (\gamma_C, \gamma_o) \in D\cup \{ \defcase \}$ with $\alpha_C \pg \gamma_C \pg \beta_C$} and {$\gamma_o = \alpha_o$};

\end{enumerate}

  \item  for $(\beta_C, \beta_o) \in D \cup \{ \defcase \}$, it holds that $\case{\newcasecharac}{?} \attacks (\beta_C, \beta_o)$ iff 
    $\newcasearg \not \sim (\beta_C,\beta_o)$.
  \end{itemize}
	{The \emph{AF mined from a dataset $D$ alone} is $\myAFalone$, 
	with 
  $\Args'=  \Args \setminus \{\newcasearg\}$ and
	$\attacks' =\attacks \cap (\Args'\times \Args')$.}
\end{definition}

Note that if $D$ is \coherent, then the ``equals'' case in the item 2 of the definition of attack will never apply. As a result, the AF mined from a \coherent\ $D$ (and any $\newcasearg$) is guaranteed to be well-founded.

\begin{definition}[Adapted from \cite{dear-2020}]
	Let $\groundext$ be the grounded extension of the AF mined from $D$ and $\newcasearg$, with default argument $\defcase$.  
  The \defemph{outcome} \defemph{for $\newcasecharac$} is $\defoutcome$ if $\defcase$ is in $\groundext$, and $\nondefoutcome$ otherwise. 
      \end{definition}
In this paper we focus on {a particular case of this scenario}:
\begin{definition} \label{def:wellbehav}
  The AF mined from $D$ alone and the AF mined from $D$ and $\newcasearg$,
	with default argument $\defcase$, are \defemph{\wellbehaved} when the following requirements are satisfied:
        \begin{enumerate}
\item the irrelevance relation $\not\sim$ is defined as: $x_1 \not \sim x_2$ iff $x_1 \not \pgeq x_2$, and
\item $\defcharac$ is the least element of $X$.\footnote{Indeed this is not a strong condition, since it can be proved that if $\charac\casei \not \pgeq \defcharac$ then all cases $\fullcase\casei$ in the casebase could be removed, as they would never change an outcome. On the other hand, assuming also the first condition in Definition \ref{def:wellbehav}, if $\case{\charac\casei}{?}$ is the new case and $\charac\casei \not \pgeq \defcharac$, then the outcome  is $\nondefoutcome$ necessarily. }

\end{enumerate}
    \end{definition}
    {This restriction connects the treatment of a characterisation $\charac\casei$ as a new case and as a past case. We will see below that these conditions are necessary in order to satisfy desirable properties, such as Theorem \ref{theo:nearest_neighbours}.}
    
	In the remainder, we will restrict attention to \wellbehaved\ mined AFs. We will refer to the (\wellbehaved) AF mined from  $D$ and $\newcasearg$, with  default argument $\defcase$, as  
	\aaDN, and to the (\wellbehaved) AF mined from  $D$ alone as \aaD.
Also, for short, given $\aaDN$,  with default argument $\defcase$,
	we will refer to the outcome for $\newcasecharac$ as $\paacbr(D,\newcasecharac)$.\footnote{Note that we omit to indicate in the notations the default argument $\defcase$, and leave it implicit instead for readability.} 
        In the remainder of the paper we assume as given arbitrary $X$, $Y$, $D$, $\newcasearg$, $\defcase$ (satisfying the previously defined constraints), unless otherwise stated.

\label{sec:properties}

In the remainder of this section we will identify some properties of \paacbr, concerning its behaviour as a form of CBR. 

\subsubsection{Agreement with 
	nearest cases.}
\label{sec:org3563999}

Our first property regards the predictions of $\paacbr$ in relation to the  ``most similar'' (or \emph{nearest}) cases to the new case, when these nearest cases all agree on an outcome. This property generalises \cite[Proposition 2]{DBLP:conf/kr/CyrasST16} {in two ways:} by considering the entire set of nearest cases, instead of requiring a unique nearest case, for $\paacbr$, instead of its instance \oaacbr.
{As in \cite{DBLP:conf/kr/CyrasST16}, we prove this property for \coherent\ casebases.} 
We first define the notion of nearest case.

\begin{definition}
A case $\fullcase{\casei} \in D$ is \defemph{nearest to $\newcasecharac$} iff $\charac\casei \pleq \newcasecharac$ and 
	it is maximally so, that is, 
	there is no $\fullcase\caseii\in D$ such that $\charac\casei \pl \charac\caseii \pleq \newcasecharac$.
\end{definition}

  \begin{theorem} \label{theo:nearest_neighbours}
If {$D$ is {\coherent} and} every nearest case to $\newcasecharac$ is of the form $\case{\charac\casei}{o}$ {for some outcome $o\in Y$} (that is, all nearest  cases to the new case agree on the same outcome), then $\paacbr(D,\newcasecharac)=o$ (that is, the outcome for $\newcasecharac$ is $o$).
\end{theorem}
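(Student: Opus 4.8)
The plan is to peel off the new case, reduce the problem to the grounded extension of the sub-AF on the ``relevant'' arguments, and then pin down the status of $\defcase$ there by a single induction that threads through the agreement hypothesis. Write $\groundext$ for the grounded extension of $\aaDN$, call a past case $\case{\gamma_C}{\gamma_o}\in D$ \emph{relevant} if $\gamma_C \pleq \newcasecharac$, and let $R$ be the set of relevant past cases together with $\defcase$ (note $\defcase\in R$, since $\defcharac$ is least, so $\defcharac\pleq\newcasecharac$). Two easy facts come first. Because $D$ is \coherent, any attack $\case{\alpha_C}{\alpha_o}\attacks\case{\beta_C}{\beta_o}$ between non-new arguments has $\alpha_C\pg\beta_C$ \emph{strictly} (equal characterisations would force equal outcomes, contradicting requirement~1 of the attack definition); hence $\aaDN$ is well-founded, so it has a unique complete extension, namely $\groundext$, and inside any sub-AF the grounded extension obeys the recursion ``an argument is in iff all its attackers are out''. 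Also, by Definition~\ref{def:dear-aacbr-def} together with requirement~1 of Definition~\ref{def:wellbehav}, the new case $\newcasearg$ is unattacked and attacks exactly the \emph{irrelevant} past cases (those with $\gamma_C\not\pleq\newcasecharac$).

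\emph{Reduction to $R$.} I would show $\groundext = \{\newcasearg\}\cup\groundext(\mathit{AF}|_R)$, where $\mathit{AF}|_R$ is the AF induced on $R$. The enabling observation is that attacks internal to $R$ are unaffected by the restriction: if $\alpha,\beta\in R$ and $\gamma$ witnesses a violation of \concision, then $\gamma_C\pl\alpha_C\pleq\newcasecharac$, so $\gamma$ is itself relevant and the same witnesses remain available inside $R$. Granting this, I would check that $E^{*}:=\{\newcasearg\}\cup\groundext(\mathit{AF}|_R)$ is a complete extension of $\aaDN$: it is conflict-free ($\newcasearg$ attacks only irrelevant cases, and $\groundext(\mathit{AF}|_R)\subseteq R$ is conflict-free); it is admissible, since $\newcasearg$ defends every relevant argument against all its irrelevant attackers while defence \emph{within} $R$ is exactly as in $\mathit{AF}|_R$; and it contains everything it defends (a defended argument cannot be irrelevant, as that would require attacking the unattacked $\newcasearg$, and a defended relevant argument is defended in $\mathit{AF}|_R$ hence already in $\groundext(\mathit{AF}|_R)$). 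By uniqueness of the complete extension, $E^{*}=\groundext$. Thus $\defcase\in\groundext$ iff $\defcase\in\groundext(\mathit{AF}|_R)$, and it suffices to work inside $\mathit{AF}|_R$.

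\emph{Core induction.} For a relevant past case $\alpha$ let $\mathrm{Max}(\alpha)$ be the $\pleq$-maximal elements of $\{\sigma\in D : \alpha_C\pleq\sigma_C\pleq\newcasecharac\}$. A short argument gives $\mathrm{Max}(\alpha)\subseteq\mathrm{Max}(\defcase)$ for every relevant $\alpha$, and $\mathrm{Max}(\defcase)$ is precisely the set of nearest cases to $\newcasecharac$; so, by hypothesis, \emph{every} case in \emph{every} $\mathrm{Max}(\alpha)$ has outcome $o$. I would then prove, by strong induction on the size of $\{\sigma\in D:\alpha_C\pl\sigma_C\pleq\newcasecharac\}$, the claim: \emph{for every relevant past case $\alpha$, $\alpha\in\groundext(\mathit{AF}|_R)$ iff $\alpha_o=o$}. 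For ``if'', every attacker $\beta$ of $\alpha$ has $\beta_C\pg\alpha_C$ and $\beta_o\neq\alpha_o=o$, so the inductive hypothesis yields $\beta\notin\groundext(\mathit{AF}|_R)$; all attackers of $\alpha$ are out and $\alpha$ is in. For ``only if'' (contrapositive), if $\alpha_o\neq o$ then $\alpha$ is not maximal (its maximal cases have outcome $o\neq\alpha_o$), so choosing any $\mu\in\mathrm{Max}(\alpha)$ and a $\pleq$-minimal $\beta$ with $\alpha_C\pl\beta_C\pleq\mu_C$ and $\beta_o=o$ produces an attacker $\beta$ of $\alpha$ (minimality secures \concision) with $\beta_o=o$; by the inductive hypothesis $\beta\in\groundext(\mathit{AF}|_R)$, so $\alpha$ has an attacker that is in and $\alpha$ is out.

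\emph{Conclusion and main obstacle.} Finally I would apply the claim to the attackers of $\defcase$, which are exactly the $\pleq$-minimal relevant cases with outcome $\nondefoutcome$. If $o=\defoutcome$, each such attacker has outcome $\nondefoutcome\neq o$, hence is out, so $\defcase$ is in and $\paacbr(D,\newcasecharac)=\defoutcome=o$; if $o=\nondefoutcome$, the nearest cases guarantee at least one attacker, and each is in, so $\defcase$ is out and $\paacbr(D,\newcasecharac)=\nondefoutcome=o$. The delicate part is the core induction, and in particular the realisation that the outcome is \emph{not} determined by the maximal cases' outcomes alone: when the nearest cases disagree, the status of $\defcase$ depends on the finer structure of the casebase, so the naive claim ``$\alpha$ is in iff all of $\mathrm{Max}(\alpha)$ share $\alpha_o$'' is false. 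The agreement hypothesis must therefore be carried through the induction via $\mathrm{Max}(\alpha)\subseteq\mathrm{Max}(\defcase)$, which is exactly what forces the witness $\beta$ in the ``only if'' step to have outcome $o$ and so be settled directly by the inductive hypothesis. Care is also needed to confirm that restricting to $R$ preserves the internal attack relation, and in the tacit assumption that at least one nearest case exists (otherwise $o$ is unconstrained and the statement is vacuous).
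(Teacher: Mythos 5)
Your proposal is correct (modulo degenerate situations that the paper itself also glosses over, such as the need for at least one nearest case to exist, which you rightly flag as making the statement otherwise vacuous/ill-posed), but it takes a genuinely different route from the paper's own proof. The paper works directly on the full framework \aaDN\ and inducts on the strata $G_i$ of the grounded extension, establishing the one-directional invariant that every argument of $\groundext$ other than \newcasearg\ has outcome $o$; it then treats the two outcomes asymmetrically: for $o=\nondefoutcome$ the default is excluded immediately by that invariant, while for $o=\defoutcome$ it invokes well-foundedness (grounded $=$ stable) to show every attacker of \defcase\ is counter-attacked. You instead (i) first prove the reduction $\groundext=\{\newcasearg\}\cup\groundext(\mathit{AF}|_R)$ via uniqueness of the complete extension in well-founded AFs --- this is essentially a special case of the paper's Lemma~\ref{lemma:lesser} (coinciding predictions), which the paper proves separately by a different strata-tracking argument and does \emph{not} use in its proof of this theorem --- and (ii) then prove, by induction on the specificity poset through your cardinality measure, the stronger biconditional that a relevant past case belongs to $\groundext(\mathit{AF}|_R)$ iff its outcome is $o$, from which both outcome cases follow uniformly by inspecting the attackers of \defcase. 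The paper's route buys economy: no reduction lemma and only a one-directional invariant are needed. Your route buys a sharper intermediate result --- an exact description of the grounded extension of the relevant sub-AF under the agreement hypothesis --- and a symmetric treatment of the two outcomes; the delicate technical points (relevance of \concision\ witnesses under restriction, the minimal-element choice that secures a genuine attack, strict decrease of the measure along attacks, and carrying the agreement hypothesis through $\mathrm{Max}(\alpha)\subseteq\mathrm{Max}(\defcase)$) are all handled soundly in your sketch.
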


  \begin{proof} Let $\groundext$ be the grounded extension of \aaDN. An outline of the proof is as follows:
	  
\begin{enumerate}
\item \label{nn:ind} We will first prove that each argument in $\groundext$ is either \(\newcasearg\) or of the form \(\case{\charac\caseii}{o}\) (that is, agreeing in outcome with all nearest cases).

\item Then we will prove that if \(o = \nondefoutcome\) (that is, \(o\) is the non-default outcome), then \(\defcase\not\in\groundext\) {(and thus $\paacbr(D,\newcasecharac)=\nondefoutcome$,
	as envisaged by the theorem)}.

\item \label{nn:def} Finally, by using the fact that $\aaDN$ is well-founded {(given that $D$ is \coherent)},
and thus $\groundext$ is also stable, we will prove that if \(o = \defoutcome\) (that is, \(o\) is the default outcome), then \(\defcase\in \groundext\) {(and thus $\paacbr(D,\newcasecharac)=\defoutcome$, as envisaged by the theorem)}.
\end{enumerate}
We will now prove 1-3.

\begin{enumerate}
	\item By definition $\groundext = \bigcup_{i \geqslant 0} G_i$. 
		We prove by induction that, for every $i$,  each argument in $G_i$ is either \(\newcasearg\) or of the form \(\case{\charac\caseii}{o}\).
		{Then, given that each element of  $\groundext$ belongs to some $G_i$, the property holds for $\groundext$}.

  \begin{enumerate}
	  \item For the base case, consider \(G_0\).  \(\newcasearg\) and all nearest cases are {unattacked, and thus in $G_0$} {(notice how this requires the AF to be regular, otherwise nearest cases could be irrelevant)}. \(G_0\) may however contain further unattacked cases. Let \(\caseii = \fullcase{\caseii}\) be such a case. If \(\newcasecharac \not \pgeq \charac{\caseii}\), then $\defcase \not\sim \caseii$ and thus \(\newcasearg\) attacks \(\caseii\), contradicting that \(\caseii\) in unattacked. So \(\charac{\caseii} \pleq \newcasecharac\). As \(\caseii\) is not a nearest case, there is a nearest case \(\casei = \fullcase{\casei}\) such that 
\(\charac{\caseii} \pl \charac{\casei}\). 
By contradiction, assume 
		  \(\outcome{\caseii} \neq o\).  Let \(\Gamma = \{\gamma \in \Args\ |\ \gamma = \fullcase{\gamma}\), \(\charac{\caseii} \pl \charac{\gamma} \pleq \charac{\casei}\) and \(\outcome{\gamma} = o \}\). Notice that \(\Gamma\) is non-empty, as \(\casei \in \Gamma\). $\Gamma$ is the set of ``potential attackers'' of \(\caseii\), but only {$\pleq$-minimal} arguments in $\Gamma$ do actually attack \(\caseii\).  Let \(\caseiv\) be such a {$\pleq$-minimal} element of \(\Gamma\).\footnote{Note that \(\caseiv\) is guaranteed to exist, as \(\Gamma\) is non-empty and otherwise we would be able to build {an arbitrarily long chain of (distinct) {arguments, decreasing} \wrt\ $\pl$. However this would allow a chain with more elements than the cardinality of \(\Gamma\), which is absurd.}}
By construction, \(\caseiv\) attacks \(\caseii\). Thus \(\caseii\) is attacked and not in \(G_0\), a contradiction.
Hence, \(\outcome{\caseii} = o\), as required.

\item For the inductive step, let us assume that the property holds for a generic \(G_i\), and let us prove it for \(G_{i+1}\).
Let \(\caseii= \fullcase{\caseii} \in G_{i+1} \setminus G_i\) (if \(\caseii \in G_i\), the property holds by the induction hypothesis).
\(\newcasearg\) does not attack \(\caseii\), as otherwise \(\caseii\) would not be defended by \(G_i\), as \(G_i\) is conflict-free. Thus, once again, as \(\caseii\) is not a nearest case, there is a nearest case \(\casei=\fullcase{\casei}\) such that \(\charac{\caseii} \pl \charac{\casei}\).
		  Again, assume that \(\outcome{\caseii} \neq o\). Then let \(\Gamma = \{\gamma \in \Args\ |\ \gamma = (\charac{\gamma}, \outcome{\gamma})\), \(\charac{\caseii} \pl \charac{\gamma} \pleq \charac{\casei}\) and \(\outcome{\gamma} = o \}\), with \(\caseiv\) a {$\pleq$-minimal element} of \(\Gamma\). 
		  Then \(\caseiv\) attacks \(\caseii\). However, as \(G_i\) defends \(\caseii\), there is then \(\casev \in G_i\) such that \(\casev\) attacks \(\caseiv\). 
By inductive hypothesis, \(\casev\) is either \(\newcasearg\) or \(\casev = (\charac{\casev}, o)\). The first option is not possible, as \(\caseiv \in \Gamma\), and thus \(\charac{\caseiv} \pleq \charac{\casei}\), and of course \(\charac{\casei} \pleq \newcasecharac\). Thus, \(\charac{\caseiv} \pleq \newcasecharac\) and is thus not attacked by $\newcasearg$. This means that \((\charac{\casev}, o)\) attacks \(\caseiv = (\charac{\caseiv}, \outcome{\caseiv})\). But this is absurd as well, as \(\caseiv \in \Gamma\) and thus \(\outcome{\caseiv} = o = \outcome{\casev}\).
Therefore, our assumption that \(\outcome{\caseii} \neq o\) was false, that is, \(\outcome{\caseii} = o\), as required.

\end{enumerate}

\item If \(o = \nondefoutcome\), the default argument \(\defcase\) is not in $\groundext$, since we have just proven that all arguments in $\groundext$ other than \newcasearg\ have outcome $o$.

\item If \(o = \defoutcome\), then let \(\caseii\) be an attacker of $\defcase$,  and thus of the form \(\caseii = \case{\charac\caseii}{\nondefoutcome}\) {(again see how regularity is necessary, since otherwise $\newcasearg$ could be the attacker)}. \(\caseii\) is not in $\groundext$ and, since $\groundext$ is also a stable extension, {some argument in} $\groundext$ attacks \(\caseii\). This is true for any attacker \(\caseii\) of the default argument, and thus the default argument is defended by $\groundext$. As $\groundext$ contains every argument it defends, the default argument is in the grounded extension, confirming that the outcome for $\newcasecharac$ is $\defoutcome$. \qedhere
\end{enumerate}
\end{proof}

\subsubsection{Addition of new cases.}
\label{sec:org3312bc4}
{The next result characterises the set of past cases/arguments attacked  when the dataset is extended with a new labelled case/argument. In particular,
this result compares the effect of predicting the outcome of some $N_2$
from $D$ alone and from $D$ extended with $\case{N_1}{o_1}$, when there is no case in $D$ with characterisation $N_1$ already and moreover $D$ is \coherent.}

{This result will be used later in the paper and is interesting in its own right as it shows that, any argument attacked by the ``newly added'' case $\case{N_1}{o_1}$ is easily identified in the sets $G_0$ and $G_1$ in the grounded extension $\groundext$, being sufficient to check those rather than the entire casebase $D$.}

\begin{lemma}
\label{lemma:attacked-entering-case}
Let $D$ be \coherent, $N_1, N_2 \in X$, $o_1 \in Y$, and suppose that there is no case in $D$ with characterisation $N_1$. Consider $AF_1 = \aaFtwo{D}{N_1}$ and $AF_2 = 
	\aaFtwo{D \cup \{\case{N_1}{o_1}\}}{N_2}$. Finally, let $\groundext(AF_1) $and $\groundext(AF_{2})$ be the respective grounded extensions.
Let $\caseii \in D$ be such that $\case{N_1}{o_1} \attacks \caseii$ in $AF_{2}$. Then, 
	\begin{enumerate}
        \item 	for every {$\caseiii$ that attacks}  $\caseii$ in $AF_1$, $N_1 \not\sim \caseiii$ (that is,  $\caseiii$ is irrelevant to $N_1$ {and, by regularity, $N_1 \not\pgeq \caseiii$)};
		\item in $AF_1$, $\case{N_1}{?}$ defends $\caseii$; 
		\item $\caseii \in \groundext(AF_1)$ and,  for $\groundext(AF_1)=\bigcup_{i \geqslant 0} G_i$, $\caseii$ is either in $G_0$ (that it, it is unattacked), or in $G_1$.
                \item {For every $\casev = \fullcase{\casev} \in D$ such that $\case{N_1}{?}$ defends ${\casev}$ in $AF_1$, if $\outcome{\casev} \neq o_1$, then, in $AF_2$, $\case{N_1}{o_1} \attacks \casev$.}
	\end{enumerate}
\end{lemma}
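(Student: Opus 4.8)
The plan is to verify all four items by unwinding the attack conditions of Definition~\ref{def:dear-aacbr-def} in $AF_1$ and $AF_2$, together with the regularity conditions of Definition~\ref{def:wellbehav}, leaning on three standing facts. First, since $Y$ has exactly two outcomes, if $\outcome{\caseii}$ differs from both $o_1$ and some $\outcome{\caseiii}$, then $\outcome{\caseiii}=o_1$. Second, since $D$ is \coherent\ the mined AFs are well-founded and the ``equals'' case of attack never applies, so every attack between arguments of $D\cup\{\defcase\}$ strictly decreases the characterisation \wrt\ $\pl$. Third, the new case attacks only members of $D\cup\{\defcase\}$ (so it is never attacked, not even by itself), and $\case{N_1}{?}\attacks\caseiii$ holds iff $N_1\not\pgeq\charac{\caseiii}$.

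For item 1, note first that $\case{N_1}{o_1}\attacks\caseii$ in $AF_2$ gives $o_1\neq\outcome{\caseii}$, $N_1\pgeq\charac{\caseii}$, and no $o_1$-labelled case strictly between $N_1$ and $\charac{\caseii}$. The bound $N_1\pgeq\charac{\caseii}$ means $\case{N_1}{?}$ does not attack $\caseii$ in $AF_1$, so any attacker $\caseiii$ of $\caseii$ in $AF_1$ lies in $D\cup\{\defcase\}$; by the binary-outcome fact $\outcome{\caseiii}=o_1$. Assuming $N_1\pgeq\charac{\caseiii}$ for contradiction, coherence makes $\charac{\caseiii}\pg\charac{\caseii}$ strict, while the hypothesis that no case in $D$ has characterisation $N_1$ (together with leastness of $\defcharac$) excludes $N_1=\charac{\caseiii}$; hence $N_1\pg\charac{\caseiii}\pg\charac{\caseii}$ with $\outcome{\caseiii}=o_1$, exhibiting exactly a forbidden intermediary for $\case{N_1}{o_1}\attacks\caseii$ in $AF_2$, a contradiction. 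Thus $N_1\not\pgeq\charac{\caseiii}$, i.e.\ $N_1\not\sim\caseiii$, which is item 1. Item 2 is then immediate: every attacker $\caseiii$ of $\caseii$ satisfies $N_1\not\pgeq\charac{\caseiii}$, which is precisely the condition for $\case{N_1}{?}\attacks\caseiii$, so $\case{N_1}{?}$ attacks all attackers of $\caseii$ and defends it. Item 3 follows by inspecting $\groundext(AF_1)=\bigcup_i G_i$: the new case is unattacked so $\case{N_1}{?}\in G_0$; if $\caseii$ is unattacked it is in $G_0$, and otherwise, by item 2, $G_0$ defends $\caseii$, placing it in $G_1$; either way $\caseii\in\groundext(AF_1)$.

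Item 4 is the delicate one. Fixing $\casev\in D$ defended by $\case{N_1}{?}$ in $AF_1$ with $\outcome{\casev}\neq o_1$, I must establish the three conditions for $\case{N_1}{o_1}\attacks\casev$ in $AF_2$. Condition~1, $o_1\neq\outcome{\casev}$, is given. For condition~2, observe that the singleton $\{\case{N_1}{?}\}$ cannot defend $\casev$ if $\case{N_1}{?}$ itself attacks $\casev$, since the new case does not attack itself; hence defending $\casev$ forces $\case{N_1}{?}\not\attacks\casev$, i.e.\ $N_1\pgeq\charac{\casev}$. For condition~3, suppose some $\caseiii\in D$ with $\outcome{\caseiii}=o_1$ satisfies $N_1\pg\charac{\caseiii}\pg\charac{\casev}$ (the bounds exclude $\defcase$ and $\case{N_1}{o_1}$, so $\caseiii\in D$). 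From the finite nonempty set $\{\zeta\in D\cup\{\defcase\}\mid\outcome{\zeta}=o_1,\ \charac{\casev}\pl\charac{\zeta}\pleq\charac{\caseiii}\}$ I extract a $\pleq$-minimal element $\caseiv$: minimality ensures no $o_1$-intermediary strictly between $\charac{\casev}$ and $\charac{\caseiv}$, so $\caseiv$ genuinely attacks $\casev$ in $AF_1$, while $\charac{\caseiv}\pleq\charac{\caseiii}\pl N_1$ yields $N_1\pgeq\charac{\caseiv}$, i.e.\ $\case{N_1}{?}\not\attacks\caseiv$. Then $\case{N_1}{?}$ fails to attack an attacker of $\casev$, contradicting that it defends $\casev$. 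So no such $\caseiii$ exists, condition~3 holds, and $\case{N_1}{o_1}\attacks\casev$ in $AF_2$.

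The main obstacle is the minimal-attacker construction for condition~3 of item 4: an arbitrary intermediate $o_1$-case $\caseiii$ need not itself attack $\casev$ (it may have a lower $o_1$-intermediary), so one must descend to a $\pleq$-minimal $o_1$-case in the interval below $\caseiii$ to obtain a bona fide attacker that is still dominated by $N_1$. The remaining care is concentrated in item 1, where the binary outcome set collapses $\outcome{\caseiii}$ to $o_1$ and the ``no case with characterisation $N_1$'' hypothesis is exactly what rules out the boundary $N_1=\charac{\caseiii}$, and in the repeated appeal to \coherence/well-foundedness to make attacks strictly decrease characterisations and to guarantee the new case is unattacked.
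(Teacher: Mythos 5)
Your proof is correct and follows essentially the same route as the paper's: items 1--3 are established exactly as there (a contradiction with the conciseness requirement of the attack $\case{N_1}{o_1}\attacks\caseii$, with items 2 and 3 as immediate corollaries), and item 4 verifies the three attack conditions via the hypothesis, the observation that a case defended by the unattacked new case cannot itself be attacked by it, and regularity. Your only departure is one of explicitness: where the paper's sketch simply asserts that conciseness ``is satisfied'' because all attackers of $\casev$ are irrelevant to $N_1$, you spell out the missing step --- an intermediate $o_1$-case need not itself attack $\casev$, so you descend to a $\pleq$-minimal $o_1$-labelled case in the interval to obtain a genuine attacker still dominated by $N_1$ --- which is the same device the paper uses in the proof of Theorem~\ref{theo:nearest_neighbours} and tightens rather than changes the argument.
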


\begin{proof} 
  \begin{enumerate}
  \item 
    Let $\caseii=\fullcase{\caseii}$. From the definition of attack:
    (i) $N_1 \pg \charac{\caseii} $,
    (ii) $o_1 \neq \outcome{\caseii}$, and 
    (iii) there is no $(\charac\casei, \outcome{x})$ such that $\outcome{x} = o_1$ and $N_1 \pg \charac\casei \pg \charac{\caseii}$. \label{conciseness-req}
    Consider $\caseiv = (\charac{\caseiv}, \outcome{\caseiv})$ such that $\caseiv$ attacks $\caseii$ in $AF_1$ (if there is no such $\caseiv$ then the result trivially holds).
    
{Assume by contradiction that} $\caseiv$ is relevant to $N_1$. Then {by regularity} $N_1 \pgeq \charac{\caseiv}$. But since $D$ is {\coherent} and $\case{N_1}{o_1}\not \in D$, $\caseiv$ and $N_1$ are distinct, and thus $N_1 \pg \charac{\caseiv}$. As $\caseiv$ attacks $\caseii$, $\outcome{\caseiv} \neq \outcome{\caseii}$, but this in turn implies that $\outcome{\caseiv} = o_1$, since $\case{N_1}{o_1}$ also attacks $\caseii$, in $AF_{2}$. But then $N_1 \pg \charac{\caseiv} \pg \charac\caseii$, with $\outcome{\caseiv} = o_1$. This contradicts {\concision} of the attack between $(N_1, o_1)$ and $\caseii$. Therefore, $\caseiv$ is not relevant to $N_1$, as we wanted to prove.
  \item {Trivially true, by} 1 {(as,} if $\caseiv$ is an attacker $\caseii$, then $N_1 \not\sim \caseiv$; but then $\case{N_1}{?} \attacks \caseiv$).
  \item Trivially true, by 2.
    \item {Since $\case{N_1}{?}$ defends ${\casev}$ in $AF_1$, then any attacker $\caseiv$ of $\casev$ is irrelevant to $N_1$, and by regularity, $N_1 \not\pgeq \caseiv$. Thus {\concision} is satisfied. Requirement 1 is the hypothesis and requirement 2 is satisfied since $\case{N_1}{?}$ defends ${\casev}$ in $AF_1$.} \qedhere
  \end{enumerate}
\end{proof}

\subsubsection{Coinciding predictions.} The last result (also used later in the paper) {identifies a ``core'' in the casebase for the purposes of outcome prediction: this amounts to all past cases that are less (or equally) specific than the new case for which the prediction is sought.} {In other words, irrelevant cases in the casebase do not affect the prediction in regular AFs.}

\begin{lemma} \label{lemma:lesser}
  Let $D_1$ and $D_2$ be two datasets.  Let  $\newcasecharac \in X$ be a characterisation, and ${D_i}_\newcasecharac = \{\casei \in D_i \mid \casei \pleq \newcasecharac\}$ {for $i=1,2$}. If ${D_1}_\newcasecharac = {D_2}_\newcasecharac$, then {$\paacbr(D_1,\newcasecharac) = \paacbr(D_2,\newcasecharac)$} (that is, \paacbr\ predicts the same outcome for $\newcasecharac$ given the two datasets).
\end{lemma}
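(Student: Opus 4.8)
The plan is to reduce the two-dataset claim to a single-dataset one: it suffices to prove that $\paacbr(D,\newcasecharac) = \paacbr(D_\newcasecharac,\newcasecharac)$ for an arbitrary dataset $D$, where $D_\newcasecharac = \{\casei \in D \mid \casei \pleq \newcasecharac\}$. Granting this, the lemma follows immediately, since applying it to both datasets and using the hypothesis ${D_1}_\newcasecharac = {D_2}_\newcasecharac$ gives $\paacbr(D_1,\newcasecharac) = \paacbr({D_1}_\newcasecharac,\newcasecharac) = \paacbr({D_2}_\newcasecharac,\newcasecharac) = \paacbr(D_2,\newcasecharac)$ (note that a subset of a \coherent\ casebase is again \coherent, so the intermediate calls are well defined). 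I therefore fix $D$, write $AF = \aaFtwo{D}{\newcasecharac}$ and $AF' = \aaFtwo{D_\newcasecharac}{\newcasecharac}$ with grounded extensions $\groundext$ and $\groundext'$, and split $D$ into the \emph{relevant} cases $R = D_\newcasecharac$ and the \emph{irrelevant} cases $I = D \setminus R$. The target becomes $\defcase \in \groundext$ iff $\defcase \in \groundext'$.

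Next I would record the structural facts that make $I$ harmless. By regularity, $\newcasearg$ attacks $(\beta_C,\beta_o) \in D \cup \{\defcase\}$ iff $\newcasecharac \not\pgeq \beta_C$, i.e. iff $\beta_C \not\pleq \newcasecharac$; hence $\newcasearg$ attacks exactly the cases of $I$ and no member of $R \cup \{\defcase\}$ (the default lies in $R\cup\{\defcase\}$ since $\defcharac$ is least), while $\newcasearg$ is itself unattacked, so $\newcasearg \in G_0 \subseteq \groundext$ and likewise $\newcasearg \in \groundext'$. The crucial observation is that an irrelevant case can never be an \emph{intermediate} between two members of $R \cup \{\defcase\}$: if $\alpha_C \pg \gamma_C$ with $\alpha \in R \cup \{\defcase\}$ and $\gamma \in D \cup \{\defcase\}$, then $\gamma_C \pl \alpha_C \pleq \newcasecharac$ forces $\gamma \in R \cup \{\defcase\}$. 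Consequently the ``no intermediate'' clause (requirement 3 of attack) is insensitive to the presence of $I$, so the attack relation restricted to $R \cup \{\defcase\}$ is literally the same in $AF$ and in $AF'$, and no attack from $\newcasearg$ lands inside $R \cup \{\defcase\}$. Thus $AF'$ coincides with the subframework of $AF$ induced on $R \cup \{\defcase\} \cup \{\newcasearg\}$.

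I would then prove two inclusions, restricted to this common set, by induction on the grounded-extension stages. For $\groundext' \subseteq \groundext$, I show $G'_i \cup \{\newcasearg\} \subseteq \groundext$ by induction on $i$: every attacker of a relevant argument that lies in $R \cup \{\defcase\}$ is dealt with exactly as in $AF'$ (by the induction hypothesis), whereas any \emph{extra} attacker coming from $I$ is attacked by $\newcasearg \in \groundext$, so $\groundext$ defends the argument. For the converse $\groundext \cap (R \cup \{\defcase\} \cup \{\newcasearg\}) \subseteq \groundext'$, I induct on $G_i$: given a relevant $a \in G_{i+1}$ and an attacker $c$ of $a$ in $AF'$, $c$ also attacks $a$ in $AF$, so some $d \in G_i$ attacks $c$. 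The main (though short) obstacle is to argue that $d \in R \cup \{\defcase\}$: $d$ cannot be $\newcasearg$ (which does not attack the relevant case $c$), and $d$ cannot lie in $I$, because every element of $I$ is attacked by $\newcasearg \in \groundext$ while $\groundext$ is conflict-free, whence $I \cap \groundext = \emptyset$; so $d \in R \cup \{\defcase\}$, and by the induction hypothesis $d \in \groundext'$ with $d$ attacking $c$ in $AF'$ as well. Hence $\groundext'$ defends $a$.

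Combining the two inclusions yields $\groundext \cap (R \cup \{\defcase\} \cup \{\newcasearg\}) = \groundext'$, and in particular $\defcase \in \groundext$ iff $\defcase \in \groundext'$, which gives $\paacbr(D,\newcasecharac) = \paacbr(D_\newcasecharac,\newcasecharac)$ and therefore the lemma. Throughout, the argument uses only the regularity (well-behaved) conditions to pin down the attacks of $\newcasearg$, together with the standard stagewise construction, monotonicity, and conflict-freeness of the grounded extension; coherence is invoked merely as the standing global assumption.
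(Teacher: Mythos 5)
Your proof is correct, and it reaches the paper's conclusion by a genuinely different route. The paper compares the two frameworks $\aaFtwo{D_1}{\newcasecharac}$ and $\aaFtwo{D_2}{\newcasecharac}$ directly: writing $\groundext_i = \bigcup_{j} G^i_j$, it proves by induction the interleaved inclusions $G^1_j \subseteq G^2_{j+1}$ and $G^2_j \subseteq G^1_{j+1}$, where the stage offset of one absorbs exactly the phenomenon you isolate (an argument unattacked in one framework may, in the other, be attacked by an irrelevant case, which $\newcasearg$ attacks in turn), and concludes $\groundext_1 = \groundext_2$ by symmetry. You instead reduce the lemma, via transitivity, to its special case $D_2 = {D_1}_{\newcasecharac}$ (``irrelevant cases may be deleted''), and prove that case by first showing that $\aaFtwo{D_{\newcasecharac}}{\newcasecharac}$ is literally the induced subframework of $\aaFtwo{D}{\newcasecharac}$ on $R \cup \{\defcase\} \cup \{\newcasearg\}$, and then running two (non-symmetric) stage inductions for the two inclusions. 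The pivotal observation that makes your reduction work is that a potential blocker $\gamma$ in the no-intermediate condition of an attack lies strictly below the \emph{attacker}, so blockers of attacks issued by relevant cases are themselves relevant; hence attacks inside $R \cup \{\defcase\}$ transfer exactly between the two frameworks. This buys you something the paper's write-up glosses over: in the paper's induction, the step ``otherwise $\casei$ would be attacked in $AF_1$'' tacitly needs a minimal-blocker argument (an attack present in $AF_2$ can be blocked in $AF_1$ by an irrelevant intermediate, and one must pass to a $\pleq$-minimal element of the set of potential attackers to exhibit an actual attacker, in the style of the proof of Theorem~\ref{theo:nearest_neighbours}), whereas in your setting attacks between relevant arguments transfer verbatim and attackers from $I$ are neutralised by $\newcasearg$. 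What the paper's formulation buys in exchange is symmetry: a single induction, stated once and applied twice, with no need for the intermediate single-dataset claim. Both arguments use only regularity, the stagewise construction of $\groundext$, its conflict-freeness, and the fact that it contains every argument it defends; as you note, neither actually needs coherence.
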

\begin{proof}
  {For $i = 1,2$, let $AF_i = \aaFtwo{D_i}{\newcasecharac}$ and the grounded extensions be $\groundext_i = \bigcup_{j \geqslant 0} G^i_j$.}
We will prove that $\forall j: G^1_j \subseteq G^2_{j+1}$ and $G^2_j \subseteq G^1_{j+1}$, and this allows us to prove that $\groundext_1 = \groundext_2$, which in turn implies the outcomes are the same.
Here we consider only $G^1_j \subseteq G^2_{j+1}$, as the other case is entirely symmetric.
  By induction on $j$:

  \begin{itemize}
  \item For the base case $j = 0$:

    If $G^1_0 \subseteq G^2_{0}$, we are done, since we always have that $G^i_j \subseteq G^i_{j+1}$. If not, there is a $\casei \in G^1_0 \setminus G^2_0$.
  Since $\casei \in G^1_0$, it is relevant to $\newcasecharac$, and thus $\casei \pleq \newcasecharac$, which in turn implies that $\casei \in D_2$, since ${D_1}_\newcasecharac = {D_2}_\newcasecharac$.
  
  On the other hand, as $\casei \not\in G^2_0$, there is a case $\caseii \in AF_2$ such that $\caseii \attacks \casei$. However, $\casei \not \in AF_1$, otherwise $\casei$ would be attacked in $AF_1$ and thus not in $G^1_0$. But then, since ${D_1}_\newcasecharac = {D_2}_\newcasecharac$, this means that $\caseii \not \pleq \newcasecharac$. Finally, this means that $\newcasearg \attacks \caseii$, and thus $G^2_0$ defends it. Therefore, $\caseii \in G^2_1$, what we wanted to prove.

  \item For the induction step, from $j$ to $j+1$:

  Again, if $G^1_{j+1} \!\subseteq\! G^2_{j+1}$, we are done. If not, there is a $\casei \in G^1_{j+1} \setminus G^2_{j+1}$. Again we can check that this implies that $\casei \in D_2$. Now, since $\casei \in G^1_{j+1}$, then $G^1_{j}$ defends it. But now, by inductive hypothesis, $G^1_{j} \subseteq G^2_{j+1}$. Therefore, $G^2_{j+1}$ also defends $\casei$, which implies that $\casei \in G^2_{j+2}$,as we wanted.\footnote{In abstract argumentation it can be verified that, if $E\subseteq \Args$ defends an argument $\caseiii$, and $E \subseteq E'$, then $E'$ also defends $\caseiii$.} This concludes the induction.
\end{itemize}

  To conclude, we can now see that $\groundext_1 = \groundext_2$, since, once more without loss of generality, if we consider $\casei \in \groundext_1$, by definition of $\groundext_1$ there is a $j$ such that $\casei \in G^1_j$. But since $G^1_j \subseteq G^2_{j+1}$, $\casei \in \groundext_2$. This proves that $\groundext_1 \subseteq \groundext_2$. The converse can be proven analogously.
\end{proof}

\section{Non-monotonicity analysis of classifiers}
\label{sec:orge5b6025}

{In this section we provide a generic analysis of the non-monotonicity properties of data-driven classifiers, using $D$, $X$ and $Y$ to denote generic inputs and outputs of classifiers, admitting our casebases, characterisations and outcomes as special instances. Later in the paper, we will apply this analysis to \paacbr\ and our modification thereof. 
} 
Typically, a classifier can be understood as a function from an input set $X$ to an output set $Y$.  In machine learning, classifiers are obtained by {training with} an initial, finite $D \subseteq (X \times Y)$, called the training set.
{In (any form of) \aacbr, $D$ can also be seen as a training set of sorts.}  
Thus, we will characterise a classifier as a two-argument function \(\learn\) that maps from a dataset \(D{\subseteq (X\times Y)}\) and from a new input $x \in X$ to a prediction $y \in Y$.\footnote{{Notice that this understanding
relies upon the assumption that classifiers are deterministic. Of course this is not the case for many machine learning models, e.g. artificial neural networks trained using stochastic gradient descent and randomised hyperparameter search. This understanding is however in line with recent work using 
decision functions as approximations of classifiers whose output needs 
explaining (e.g. see \cite{Shih_19}). 
Moreover, it works well 
when analysing $\PAACBR$.}} 
{Notice that this function is total, in line with the common assumptions that classifiers generalise beyond their training dataset.}

Let us model directly the relationship between the dataset \(D\) and the predictions it makes {via the classifier} as an inference system in the following way:

\begin{definition} \label{def:non-monot-analysis}
	Given a classifier \(\learn{:\powerset{(X\times Y)}\times X \rightarrow Y}\), let $\lang=\lang^{+} \cup \lang^{-}$ be a language consisting of atoms \(\lang^{+} = X \times Y\) and negative sentences  
\(\lang^{-} = \{\neg(x,y) | (x,y) \in X \times Y\}\). 
Then,  \(\infers_{\learn}\) is an \emph{inference relation} from \(2^{\lang^{+}}\) to \(\lang\) such that 
  \begin{itemize}
	  \item \(D \infers_{\learn} (x,y)\), iff  {\(\learn(D,x) = y\)};
  \item \(D \infers_{\learn} \neg(x,y)\), {iff there is a $y'$ such that}
	  {\(\learn(D,x) = y'\)}
          and \(y' \neq y\).\footnote{{We could equivalently have defined
              \(D \infers_{\learn} \neg(x,y)\) iff \(\learn(D,x) \neq y\).
              We have not done so as the used definition can be generalized for a scenario in which $\learn$ is not necessarily a total function. This scenario is left for future work.}}
  \end{itemize}
\end{definition}

{Intuitively, $\learn$ defines a simple language $\lang$ consisting of atoms (representing labelled examples) and their negations, and $\infers_{\learn}$ applies a sort of closed world assumption around $\learn$. 
}

Then, we can study  non-monotonicity properties from Section \ref{sec:non-monot-prop}  of $\infers_{\learn}$. 

\begin{theorem}\label{theo:compl}
\label{theo:cm-cut}
\label{theo:rm}
\begin{enumerate}
\item 
$\infers_{\learn}$ is \emph{complete}, i.e.		
  for every $(x,y)\in (X\times Y)$, either $D \infers_{\learn} (x,y)$ or $D \infers_{\learn} \neg (x,y)$.
\item {$\infers_{\learn}$ is \emph{consistent}, i.e.
    for every $(x,y)\in (X\times Y)$, it does not hold that both $D \infers_{\learn} (x,y)$ and $D \infers_{\learn} \neg (x,y)$.}
\item $\infers_{\learn}$ is cautiously monotonic iff it satisfies cut.
\item $\infers_{\learn}$ is cautiously monotonic {iff} it is cumulative.
\item $\infers_{\learn}$ is cautiously monotonic {iff} it satisfies rational monotonicity.
\end{enumerate}
\end{theorem}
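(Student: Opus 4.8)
The plan is to handle the five parts in order, front-loading the two ``sanity'' clauses and then isolating a single condition on $\learn$ that drives everything else. First I would prove completeness (part~1) and consistency (part~2) directly from the fact that $\learn$ is a \emph{total, single-valued} function. For any $D$ and $x$ the value $y^{\ast} = \learn(D,x)$ exists and is unique: if $y^{\ast} = y$ then $D \cinfers (x,y)$, and otherwise $y^{\ast}$ itself witnesses $D \cinfers \neg(x,y)$, which gives completeness; single-valuedness forbids $\learn(D,x)$ from equalling both $y$ and some $y' \neq y$, which gives consistency. A consequence I would record at once is the dichotomy, valid for any atom $b = (x,y)$, that $D \cinfers b \iff D \not\cinfers \neg b \iff \learn(D,x) = y$. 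Equivalently, the entire consequence set of $D$ is determined by the map $z \mapsto \learn(D,z)$. (None of this uses that $Y$ is binary, only that $\learn$ is total and functional.)

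The heart of the argument is part~3. The key observation is that in both cautious monotonicity and cut the shared hypothesis is $A \cinfers a$: writing $a = (x_1,y_1)$, this says $\learn(A,x_1) = y_1$, so $a$ is a \emph{self-predicted} case and $A \cup \{a\} = A \cup \{(x_1,\learn(A,x_1))\}$. I would introduce the single condition $(\star)$: for every $D$ and every $x_1$, adding the self-predicted atom $(x_1,\learn(D,x_1))$ leaves all predictions unchanged, i.e. $\learn(D \cup \{(x_1,\learn(D,x_1))\}, z) = \learn(D,z)$ for all $z$. I would then show that cautious monotonicity and cut are \emph{each} equivalent to $(\star)$, whence they are equivalent to one another. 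For the forward direction of cautious monotonicity I fix $D, x_1$, set $y_1 = \learn(D,x_1)$ so that $D \cinfers (x_1,y_1)$, and apply the property with $b = (z,\learn(D,z))$ for an arbitrary $z$ to conclude $D \cup \{(x_1,y_1)\} \cinfers (z,\learn(D,z))$, i.e. predictions agree at $z$; the forward direction of cut is the mirror image, reading the agreement off the larger casebase instead. Conversely, if $(\star)$ holds then whenever $A \cinfers a$ the casebases $A$ and $A \cup \{a\}$ induce identical consequence sets by the determination remark, so both implications hold trivially.

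Part~4 is then immediate: cumulativity is by definition the conjunction of cautious monotonicity and cut, and since part~3 makes these equivalent, cautious monotonicity alone already entails the conjunction (and conversely cumulativity contains cautious monotonicity). For part~5 I would invoke the dichotomy from part~2: for the added atom $b = (x_2,y_2)$ the rational-monotonicity hypothesis $A \not\cinfers \neg b$ is literally the same statement as $A \cinfers b$. Substituting this in, rational monotonicity reads ``$A \cinfers a$ and $A \cinfers b$ imply $A \cup \{b\} \cinfers a$'', which after swapping the names of the atom being added and the conclusion is exactly cautious monotonicity; so the two properties coincide.

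I expect the only genuine work to lie in part~3, and specifically in spotting that the positive hypothesis $A \cinfers a$ common to both properties forces $a$ to be a case the classifier already predicts, so that cautious monotonicity and cut become the two inclusions of a single equality of consequence sets and collapse to the same condition $(\star)$. Everything else is bookkeeping resting on totality and single-valuedness of $\learn$, together with the completeness/consistency dichotomy established first.
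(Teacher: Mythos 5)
Your proposal is correct, and parts 1, 2, 4 and 5 match the paper's proof essentially line for line; the genuine difference is in part 3. The paper proves the equivalence of cautious monotonicity and cut by a short contradiction argument carried out purely at the level of the inference relation: assuming cautious monotonicity, $D \infers_{\learn} p$ and $D \cup \{p\} \infers_{\learn} q$, completeness gives $D \infers_{\learn} q$ or $D \infers_{\learn} \neg q$, and the second disjunct is refuted because cautious monotonicity would propagate $\neg q$ to $D \cup \{p\}$, violating consistency; the converse is analogous. That argument uses completeness and consistency only as abstract properties, so it would apply verbatim to \emph{any} complete and consistent inference relation, not just one induced by a classifier. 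You instead unfold the definition of $\infers_{\learn}$ and route both properties through a single semantic condition $(\star)$ --- that adding a self-predicted example $(x_1,\learn(D,x_1))$ leaves the whole prediction map $z \mapsto \learn(D,z)$, hence the whole consequence set, unchanged --- proving cautious monotonicity $\Leftrightarrow (\star) \Leftrightarrow$ cut. This is sound (your instantiation of $b$ as an arbitrary self-predicted atom is exactly what extracts $(\star)$ from each property), and it buys something the paper only states informally in its introduction: that for classifier-induced inference, cautious monotonicity is precisely closure under self-supervision, i.e.\ invariance of the classifier under augmenting the dataset with its own predictions. The trade-off is that your argument is tied to the functional form of $\infers_{\learn}$, whereas the paper's is more abstract; also note that your part 5, like the paper's, silently needs consistency (not just completeness) for the direction from rational monotonicity back to cautious monotonicity, which your dichotomy from part 2 does cover.
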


\begin{proof}
\begin{enumerate}
\item  By definition of $\infers_{\learn}$, directly from the totality of $\learn$.
\item  By definition of $\infers_{\learn}$, since $\learn$ is a function.
\item  {Let $\infers_{\learn}$ be cautiously monotonic, $D \infers_{\learn} p$ and $D \cup \{p\} \infers_{\learn} q$, for $p,q \in \lang$. By completeness, either $D \infers_{\learn} q$ or $D \infers_{\learn} \neg q$ (here $\neg q= r$ if $q = \neg r$, and $\neg r$ if $q=r$). In the first case we are done. Suppose the second case holds. Since $D \infers_{\learn} p$, by cautious monotonicity $D \cup \{p\} \infers_{\learn} \neg q$. But then $D \infers_{\learn} q$ and $D \infers_{\learn} \neg q$, which is absurd since $\infers_{\learn}$ is consistent. Therefore $D \not \infers_{\learn} \neg q$, and then $D \infers_{\learn} q$. The converse can be proven analogously.}
  
\item {Trivial from 3.}
\item Since $\infers_{\learn}$ is complete, $D \not \infers_{\learn} \neg p$ implies $D {\infers_{\learn}} p$, and thus rational monotonicity reduces to cautious monotonicity.
\end{enumerate}
\end{proof}

\section{Cautious monotonicity in \texorpdfstring{$\bm{\pAACBR}$}{$\pAACBR$}}
\label{sec:orgef4abe6}

Our first main result is about (lack of) cautious monotonicity of 
{the inference relation drawn from the classifier $\paacbr(D,\newcasecharac)$. } 
\begin{theorem}
  \label{theo:aacbr-not-caut-mono}
	{$\vdash_{\paacbr}$} is not cautiously monotonic.
\end{theorem}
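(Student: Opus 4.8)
The plan is to prove that $\vdash_{\paacbr}$ is not cautiously monotonic by exhibiting a single counterexample: a coherent casebase $D$ together with two new characterisations on which cautious monotonicity fails. Recall that, by Definition~\ref{def:non-monot-analysis}, cautious monotonicity of $\vdash_{\paacbr}$ means that whenever $D \vdash_{\paacbr} (x_1, o_1)$ and $D \vdash_{\paacbr} (x_2, o_2)$, we must have $D \cup \{(x_1, o_1)\} \vdash_{\paacbr} (x_2, o_2)$; that is, if $\paacbr(D, x_1) = o_1$ and $\paacbr(D, x_2) = o_2$, then $\paacbr(D \cup \{(x_1, o_1)\}, x_2) = o_2$. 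So I must find a casebase $D$ and characterisations $x_1, x_2$ such that adding the (self-predicted) case $(x_1, \paacbr(D,x_1))$ to $D$ \emph{changes} the prediction for $x_2$.

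First I would fix a concrete instance of the setting: take $X$ to be finite sets of features ordered by $\supseteq$ (so that $\paacbr$ specialises to $\oaacbr$ on coherent casebases), with $\defcharac = \emptyset$ the least element, and $\not\sim$ defined by regularity. This lets me reason visually with the mined AFs exactly as in Example~\ref{running-example}. The counterexample should exploit the same non-monotonic mechanism illustrated there: a newly added case, by virtue of being more specific, interposes itself in the attack chain and thereby ``reactivates'' or ``deactivates'' an attack on the default argument, flipping the grounded-extension membership of $\defcase$ for some \emph{other} new case $x_2$.

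The key steps, carried out concretely, are: (i) choose $D$ and compute $\paacbr(D, x_1) = o_1$ by constructing the mined AF $\aaFtwo{D}{x_1}$ and its grounded extension; (ii) compute $\paacbr(D, x_2) = o_2$ similarly from $\aaFtwo{D}{x_2}$; (iii) form the augmented casebase $D' = D \cup \{(x_1, o_1)\}$, verify it is still coherent (this requires that no case in $D$ already has characterisation $x_1$, so that adding $(x_1,o_1)$ introduces no conflicting outcome), and compute $\paacbr(D', x_2)$ by constructing $\aaFtwo{D'}{x_2}$; (iv) exhibit that $\paacbr(D', x_2) \neq o_2$, which by Definition~\ref{def:non-monot-analysis} witnesses $D \vdash_{\paacbr} (x_1,o_1)$, $D \vdash_{\paacbr} (x_2,o_2)$, but $D \cup \{(x_1,o_1)\} \not\vdash_{\paacbr} (x_2, o_2)$, contradicting cautious monotonicity.

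The main obstacle is the design of the counterexample itself rather than any step of its verification: I must arrange the specificity order among the cases so that the self-predicted case $(x_1, o_1)$, when added, genuinely alters the defence structure around $\defcase$ relative to the probe $x_2$. The natural way to engineer this is to let $x_1$ sit strictly between two existing cases of $D$ in the $\supseteq$-order, so that inserting it (with the outcome $\paacbr$ itself assigns) breaks or forges a direct attack in the chain reaching $\defcase$, in the manner of Figures~\ref{fig:example-legal-1} and~\ref{fig:example-legal-2}. Once the right configuration is found, each grounded-extension computation is a short, routine check of which arguments are unattacked and what they defend, so the verification is mechanical; the creative content is entirely in selecting $D$, $x_1$ and $x_2$ so that the flip occurs while coherence is preserved.
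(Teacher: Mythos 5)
Your reduction of the statement to ``find a coherent $D$ and characterisations $x_1, x_2$ such that adding $(x_1, \paacbr(D,x_1))$ flips the prediction for $x_2$'' is exactly right, and it matches the paper's strategy (the paper exhibits $D = \{\caseset[a]{+}, \caseset[c]{+}, \caseset[a,b]{-}, \caseset[c,z]{-}\}$ with $\pleq\,=\,\supseteq$, $x_1 = \{a,b,c\}$, $x_2 = \{a,b,c,z\}$). But your proposal never actually produces the counterexample, and for this theorem the counterexample \emph{is} the proof: everything else you write (compute three grounded extensions, check coherence) is, as you yourself say, routine verification. Deferring ``the creative content'' of selecting $D$, $x_1$, $x_2$ means the proof attempt is incomplete in precisely the part that carries all the mathematical weight.

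Worse, the one concrete design hint you do give --- ``let $x_1$ sit strictly between two existing cases of $D$ in the $\supseteq$-order, so that inserting it breaks or forges a direct attack in the chain'' --- does not work. If $x_1$ lies in a single chain between $\alpha, \beta \in D$ with $\alpha_C \subset x_1 \subset \beta_C$, then $\beta$ is irrelevant to $x_1$, so by Theorem~\ref{theo:nearest_neighbours} the self-predicted outcome is $o_1 = \outcome{\alpha}$; the added case $(x_1,o_1)$ then attacks nothing below it (any such attack is blocked by $\alpha$ itself via requirement 3 of Definition~\ref{def:dear-aacbr-def}), and it fails to block the existing attack $\beta \attacks \alpha$ (blocking would require an intermediate case with $\beta$'s outcome, not $\alpha$'s). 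So the attack structure relevant to any probe $x_2$ is unchanged and no flip occurs --- this is exactly the intuition for why one might wrongly expect cautious monotonicity to hold. The paper's counterexample needs two incomparable chains: the prediction $+$ for $x_1 = \{a,b,c\}$ is forced by the \emph{irrelevance} attack of the new-case argument on $\caseset[c,z]{-}$ (one chain), making $x_1$ disagree with its nearest case $\caseset[a,b]{-}$ on the other chain; only because of this disagreement does the added case $\caseset[a,b,c]{+}$ forge a genuinely new attack on $\caseset[a,b]{-}$, which defends $\caseset[a]{+}$ and flips the outcome for $x_2 = \{a,b,c,z\}$ from $-$ to $+$. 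This two-chain mechanism --- the prediction for $x_1$ being determined non-locally while its nearest case disagrees --- is the missing idea, and without it your search, guided by the single-chain heuristic, would not terminate in a valid witness.
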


\begin{proof}
	{We will show a counterexample, instantiating in the following way: $X = \powerset{\{a,b,c,z\}}$, $Y = \{-, +\}$, and $\pleq = \supseteq$. Define
	\(D {= \{\caseset[a]{+}, \caseset[c]{+},
          \caseset[a,b]{+}, \caseset[c,z]{+}\}}\) {and \defcase = \defcaseset\ } {from which $\aaD$ in} Figure \ref{fig:cb} {is obtained}, and  two new cases: \(N_1 = \{a,b,c\}\) and \(N_2 = \{a,b,c,z\}\).
        }

\begin{figure}[h!]
\begin{center}
\begin{tikzpicture}[>=latex,line join=bevel,]
  \pgfsetlinewidth{1bp}
\pgfsetcolor{black}
\draw [->] (39.167bp,70.877bp) .. controls (43.84bp,76.107bp) and (49.5bp,82.44bp)  .. (61.502bp,95.872bp);
\draw [->] (26.093bp,25.343bp) .. controls (26.398bp,28.924bp) and (26.738bp,32.924bp)  .. (27.936bp,46.997bp);
\draw [->] (104.79bp,70.401bp) .. controls (99.725bp,75.816bp) and (93.47bp,82.498bp)  .. (80.847bp,95.982bp);
\draw [->] (117.18bp,25.343bp) .. controls (116.95bp,28.924bp) and (116.7bp,32.924bp)  .. (115.8bp,46.997bp);
\begin{scope}
  \definecolor{strokecol}{rgb}{0.0,0.0,0.0};
  \pgfsetstrokecolor{strokecol}
  \draw [thin] (115.0bp,59.5bp) ellipse (20.0bp and 12.5bp);
  \draw (115.0bp,59.5bp) node {$\caseset[c]{+}$};
\end{scope}
\begin{scope}
  \definecolor{strokecol}{rgb}{0.0,0.0,0.0};
  \pgfsetstrokecolor{strokecol}
  \draw [thin] (118.0bp,12.5bp) ellipse (25.0bp and 12.5bp);
  \draw (118.0bp,12.5bp) node {$\caseset[c,z]{-}$};
\end{scope}
\begin{scope}
  \definecolor{strokecol}{rgb}{0.0,0.0,0.0};
  \pgfsetstrokecolor{strokecol}
  \draw [thin] (25.0bp,12.5bp) ellipse (25.0bp and 12.5bp);
  \draw (25.0bp,12.5bp) node {$\caseset[a,b]{-}$};
\end{scope}
\begin{scope}
  \definecolor{strokecol}{rgb}{0.0,0.0,0.0};
  \pgfsetstrokecolor{strokecol}
  \draw [thin] (29.0bp,59.5bp) ellipse (20.5bp and 12.5bp);
  \draw (29.0bp,59.5bp) node {$\caseset[a]{+}$};
\end{scope}
\begin{scope}
  \definecolor{strokecol}{rgb}{0.0,0.0,0.0};
  \pgfsetstrokecolor{strokecol}
  \draw [thin] (71.0bp,106.5bp) ellipse (18.0bp and 12.5bp);
  \draw (71.0bp,106.5bp) node {$\defcaseset$};
\end{scope}
\end{tikzpicture}
  
\end{center}
\caption{$\aaD$, given $\defcase=\defcaseset$, for the proof of Theorem~\ref{theo:aacbr-not-caut-mono}.}
\label{fig:cb}
\end{figure}

Let us now consider {$\paacbr(D,N_1)$ and $\paacbr(D,N_2)$}. We can see in Figure \ref{fig:n_1} that \(D \vdash_{\paacbr} (N_1, +)\) and in Figure \ref{fig:n_2} that \(D \vdash_{\paacbr} (N_2, -)\).

Now, finally, let us consider {$\aaFtwo{D\cup\{\case{N_1}{+}\}}{N_2})$}
in Figure \ref{fig:n_2_alt}. We can then conclude that \(D \cup \{\case{N_1}{+}\} \vdash_{\paacbr} (N_2, +)\) even though \(D \vdash_{\paacbr} (N_1,+)\) and \(D \vdash_{\paacbr} (N_2, -)\), as required.
\end{proof}

\begin{figure}[h!]
\begin{center}
\begin{tikzpicture}[>=latex,line join=bevel,]
  \pgfsetlinewidth{1bp}
\pgfsetcolor{black}
\draw [->] (39.167bp,117.88bp) .. controls (43.84bp,123.11bp) and (49.5bp,129.44bp)  .. (61.502bp,142.87bp);
\draw [->] (104.79bp,117.4bp) .. controls (99.725bp,122.82bp) and (93.47bp,129.5bp)  .. (80.847bp,142.98bp);
\draw [->] (83.843bp,25.343bp) .. controls (88.664bp,30.164bp) and (94.244bp,35.744bp)  .. (106.77bp,48.272bp);
\draw [->] (26.093bp,72.343bp) .. controls (26.398bp,75.924bp) and (26.738bp,79.924bp)  .. (27.936bp,93.997bp);
\draw [->] (117.18bp,72.343bp) .. controls (116.95bp,75.924bp) and (116.7bp,79.924bp)  .. (115.8bp,93.997bp);
\begin{scope}
  \definecolor{strokecol}{rgb}{0.0,0.0,0.0};
  \pgfsetstrokecolor{strokecol}
  \definecolor{fillcol}{rgb}{0.83,0.83,0.83};
  \pgfsetfillcolor{fillcol}
  \filldraw [opacity=1] [thin] (25.0bp,59.5bp) ellipse (25.0bp and 12.5bp);
  \draw (25.0bp,59.5bp) node {$\caseset[a,b]{-}$};
\end{scope}
\begin{scope}
  \definecolor{strokecol}{rgb}{0.0,0.0,0.0};
  \pgfsetstrokecolor{strokecol}
  \draw [thin] (29.0bp,106.5bp) ellipse (20.5bp and 12.5bp);
  \draw (29.0bp,106.5bp) node {$\caseset[a]{+}$};
\end{scope}
\begin{scope}
  \definecolor{strokecol}{rgb}{0.0,0.0,0.0};
  \pgfsetstrokecolor{strokecol}
  \draw [thin] (71.0bp,153.5bp) ellipse (18.0bp and 12.5bp);
  \draw (71.0bp,153.5bp) node {$\defcaseset$};
\end{scope}
\begin{scope}
  \definecolor{strokecol}{rgb}{0.0,0.0,0.0};
  \pgfsetstrokecolor{strokecol}
  \definecolor{fillcol}{rgb}{0.83,0.83,0.83};
  \pgfsetfillcolor{fillcol}
  \filldraw [thin] (107.0bp,12.5bp) -- (89.0bp,25.0bp) -- (53.0bp,25.0bp) -- (35.0bp,12.5bp) -- (53.0bp,0.0bp) -- (89.0bp,0.0bp) -- cycle;
  \draw (71.0bp,12.5bp) node {$\caseset[a,b,c]{?}$};
\end{scope}
\begin{scope}
  \definecolor{strokecol}{rgb}{0.0,0.0,0.0};
  \pgfsetstrokecolor{strokecol}
  \definecolor{fillcol}{rgb}{0.83,0.83,0.83};
  \pgfsetfillcolor{fillcol}
  \filldraw [opacity=1] [thin] (115.0bp,106.5bp) ellipse (20.0bp and 12.5bp);
  \draw (115.0bp,106.5bp) node {$\caseset[c]{+}$};
\end{scope}
\begin{scope}
  \definecolor{strokecol}{rgb}{0.0,0.0,0.0};
  \pgfsetstrokecolor{strokecol}
  \draw [thin] (118.0bp,59.5bp) ellipse (25.0bp and 12.5bp);
  \draw (118.0bp,59.5bp) node {$\caseset[c,z]{-}$};
\end{scope}
\end{tikzpicture}

\end{center}
\caption{$\aaFtwo{D}{N_1}$ for the proof of Theorem~\ref{theo:aacbr-not-caut-mono}, with the grounded extension coloured.}
\label{fig:n_1}
\end{figure}

\begin{figure}[h!]
\begin{center}
\begin{tikzpicture}[>=latex,line join=bevel,]
  \pgfsetlinewidth{1bp}
\pgfsetcolor{black}
\draw [->] (39.167bp,117.88bp) .. controls (43.84bp,123.11bp) and (49.5bp,129.44bp)  .. (61.502bp,142.87bp);
\draw [->] (26.093bp,72.343bp) .. controls (26.398bp,75.924bp) and (26.738bp,79.924bp)  .. (27.936bp,93.997bp);
\draw [->] (104.79bp,117.4bp) .. controls (99.725bp,122.82bp) and (93.47bp,129.5bp)  .. (80.847bp,142.98bp);
\draw [->] (117.18bp,72.343bp) .. controls (116.95bp,75.924bp) and (116.7bp,79.924bp)  .. (115.8bp,93.997bp);
\begin{scope}
  \definecolor{strokecol}{rgb}{0.0,0.0,0.0};
  \pgfsetstrokecolor{strokecol}
  \definecolor{fillcol}{rgb}{0.83,0.83,0.83};
  \pgfsetfillcolor{fillcol}
  \filldraw [opacity=1] [thin] (25.0bp,59.5bp) ellipse (25.0bp and 12.5bp);
  \draw (25.0bp,59.5bp) node {$\caseset[a,b]{-}$};
\end{scope}
\begin{scope}
  \definecolor{strokecol}{rgb}{0.0,0.0,0.0};
  \pgfsetstrokecolor{strokecol}
  \draw [thin] (29.0bp,106.5bp) ellipse (20.5bp and 12.5bp);
  \draw (29.0bp,106.5bp) node {$\caseset[a]{+}$};
\end{scope}
\begin{scope}
  \definecolor{strokecol}{rgb}{0.0,0.0,0.0};
  \pgfsetstrokecolor{strokecol}
  \definecolor{fillcol}{rgb}{0.83,0.83,0.83};
  \pgfsetfillcolor{fillcol}
  \filldraw [opacity=1] [thin] (71.0bp,153.5bp) ellipse (18.0bp and 12.5bp);
  \draw (71.0bp,153.5bp) node {$\defcaseset$};
\end{scope}
\begin{scope}
  \definecolor{strokecol}{rgb}{0.0,0.0,0.0};
  \pgfsetstrokecolor{strokecol}
  \definecolor{fillcol}{rgb}{0.83,0.83,0.83};
  \pgfsetfillcolor{fillcol}
  \filldraw [thin] (157.5bp,12.5bp) -- (137.75bp,25.0bp) -- (98.25bp,25.0bp) -- (78.5bp,12.5bp) -- (98.25bp,0.0bp) -- (137.75bp,0.0bp) -- cycle;
  \draw (118.0bp,12.5bp) node {$\caseset[a,b,c,z]{?}$};
\end{scope}
\begin{scope}
  \definecolor{strokecol}{rgb}{0.0,0.0,0.0};
  \pgfsetstrokecolor{strokecol}
  \draw [thin] (115.0bp,106.5bp) ellipse (20.0bp and 12.5bp);
  \draw (115.0bp,106.5bp) node {$\caseset[c]{+}$};
\end{scope}
\begin{scope}
  \definecolor{strokecol}{rgb}{0.0,0.0,0.0};
  \pgfsetstrokecolor{strokecol}
  \definecolor{fillcol}{rgb}{0.83,0.83,0.83};
  \pgfsetfillcolor{fillcol}
  \filldraw [opacity=1] [thin] (118.0bp,59.5bp) ellipse (25.0bp and 12.5bp);
  \draw (118.0bp,59.5bp) node {$\caseset[c,z]{-}$};
\end{scope}
\end{tikzpicture}

\end{center}
\caption{$\aaFtwo{D}{N_2}$ for the proof of Theorem~\ref{theo:aacbr-not-caut-mono}, 
with the grounded extension coloured.}
\label{fig:n_2}
\end{figure}

\begin{figure}[h!]
\begin{center}
\begin{tikzpicture}[>=latex,line join=bevel,]
  \pgfsetlinewidth{1bp}
\pgfsetcolor{black}
\draw [->] (52.167bp,117.88bp) .. controls (56.84bp,123.11bp) and (62.5bp,129.44bp)  .. (74.502bp,142.87bp);
\draw [->] (31.459bp,25.343bp) .. controls (32.145bp,28.924bp) and (32.911bp,32.924bp)  .. (35.606bp,46.997bp);
\draw [->] (120.1bp,117.4bp) .. controls (114.68bp,122.82bp) and (108.0bp,129.5bp)  .. (94.518bp,142.98bp);
\draw [->] (39.093bp,72.343bp) .. controls (39.398bp,75.924bp) and (39.738bp,79.924bp)  .. (40.936bp,93.997bp);
\draw [->] (134.63bp,72.343bp) .. controls (134.25bp,75.924bp) and (133.83bp,79.924bp)  .. (132.33bp,93.997bp);
\begin{scope}
  \definecolor{strokecol}{rgb}{0.0,0.0,0.0};
  \pgfsetstrokecolor{strokecol}
  \draw [thin] (38.0bp,59.5bp) ellipse (25.0bp and 12.5bp);
  \draw (38.0bp,59.5bp) node {$\caseset[a,b]{-}$};
\end{scope}
\begin{scope}
  \definecolor{strokecol}{rgb}{0.0,0.0,0.0};
  \pgfsetstrokecolor{strokecol}
  \definecolor{fillcol}{rgb}{0.83,0.83,0.83};
  \pgfsetfillcolor{fillcol}
  \filldraw [opacity=1] [thin] (42.0bp,106.5bp) ellipse (20.5bp and 12.5bp);
  \draw (42.0bp,106.5bp) node {$\caseset[a]{+}$};
\end{scope}
\begin{scope}
  \definecolor{strokecol}{rgb}{0.0,0.0,0.0};
  \pgfsetstrokecolor{strokecol}
  \draw [thin] (84.0bp,153.5bp) ellipse (18.0bp and 12.5bp);
  \draw (84.0bp,153.5bp) node {$\defcaseset$};
\end{scope}
\begin{scope}
  \definecolor{strokecol}{rgb}{0.0,0.0,0.0};
  \pgfsetstrokecolor{strokecol}
  \definecolor{fillcol}{rgb}{0.83,0.83,0.83};
  \pgfsetfillcolor{fillcol}
  \filldraw [opacity=1] [thin] (29.0bp,12.5bp) ellipse (29.0bp and 12.5bp);
  \draw (29.0bp,12.5bp) node {$\caseset[a,b,c]{+}$};
\end{scope}
\begin{scope}
  \definecolor{strokecol}{rgb}{0.0,0.0,0.0};
  \pgfsetstrokecolor{strokecol}
  \definecolor{fillcol}{rgb}{0.83,0.83,0.83};
  \pgfsetfillcolor{fillcol}
  \filldraw [thin] (180.5bp,12.5bp) -- (160.75bp,25.0bp) -- (121.25bp,25.0bp) -- (101.5bp,12.5bp) -- (121.25bp,0.0bp) -- (160.75bp,0.0bp) -- cycle;
  \draw (141.0bp,12.5bp) node {$\caseset[a,b,c,z]{?}$};
\end{scope}
\begin{scope}
  \definecolor{strokecol}{rgb}{0.0,0.0,0.0};
  \pgfsetstrokecolor{strokecol}
  \draw [thin] (131.0bp,106.5bp) ellipse (20.0bp and 12.5bp);
  \draw (131.0bp,106.5bp) node {$\caseset[c]{+}$};
\end{scope}
\begin{scope}
  \definecolor{strokecol}{rgb}{0.0,0.0,0.0};
  \pgfsetstrokecolor{strokecol}
  \definecolor{fillcol}{rgb}{0.83,0.83,0.83};
  \pgfsetfillcolor{fillcol}
  \filldraw [opacity=1] [thin] (136.0bp,59.5bp) ellipse (25.0bp and 12.5bp);
  \draw (136.0bp,59.5bp) node {$\caseset[c,z]{-}$};
\end{scope}
\end{tikzpicture}

\end{center}
\caption{$\aaFtwo{D\cup \{\case{N_1}{+}\}}{N_2}$ for the proof of Theorem~\ref{theo:aacbr-not-caut-mono}, 
with the grounded extension coloured.}	
\label{fig:n_2_alt}
\end{figure}

Note that the proof of Theorem~\ref{theo:aacbr-not-caut-mono} shows that the inference relation drawn from the original form of \aacbr\ (that is \oaacbr) is also non-cautiously monotonic, given that the counterexample in the proof is also obtained by using \oaacbr. This counterexample amounts to an expansion of Example~\ref{running-example}, as follows.   

\begin{example} (Example~\ref{running-example} continued)
  \label{running-example-2} 
Consider now that a different type of crime happened: public offending someone's honour, which we will call defamation (\emph{df}). In one case, it was established that the defendant did publicly damage someone's honour, and was considered guilty \(\case{\{df\}}{+}\). In a subsequent case, even if proven that the defendant did hurt someone's honour, it was established that this was done by a true allegation (the truth defence), and thus the case was dismissed, represented as \(\case{\{df,td\}}{-}\).

  What happens, then, if a same defendant is:
  \begin{enumerate}
  \item simultaneously proven guilty of homicide, of defamation, but shown to have committed the homicide in self-defence ($\caseset[hm,df,sd]{?}$)?
  \item simultaneously proven guilty of homicide, of defamation, shown to have committed the homicide in self-defence, also shown to have committed defamation by a true allegation ($\caseset[hm,df,sd,td]{?}$)?
  \end{enumerate}

  {We can map this to our counterexample in Theorem~\ref{theo:aacbr-not-caut-mono} by setting $a = hm$, $b = sd$, $c = df$, and $z = td$.
  The first question is answered by the AF represented in Figure \ref{fig:n_1}, with outcome $+$, that is, the defendant is considered guilty.}

{What we show in the proof of Theorem~\ref{theo:aacbr-not-caut-mono}, given this interpretation of the counter-example, is that the answer to the second question in $\paacbr$ would depend on whether the case in the first question was already judged or not. If not, then the cases $\caseset[hm,sd]{-}$ and $\caseset[df,td]{-}$ would be the nearest cases, and the outcome would be $-$, that is, not guilty. However, if the case in the first question was already judged and incorporated into the case law, it would serve as a counterargument for $\caseset[hm,sd]{-}$, and guarantee that the outcome is $+$, that is, guilty. Intuitively this seems strange, and we focus on one reason for that: the case in the first question was judged as expected by the case law, and it may seem strange that the order in which it happens may affects the case in the second question.}
\end{example}

{The example above aims only to illustrate an interpretation in which the way $\paacbr$ operates does not seem appropriate. Whether this behaviour of $\oaacbr$ in particular is desirable or not depends on other elements such as the interrelation between features (in general, for $\paacbr$, between the characterisations and the partial order).
}

\section{A cumulative \texorpdfstring{$\bm{\pAACBR}$}{$\pAACBR$}}
\label{sec:org432032f}

We will now present {\caacbr, a novel, \emph{cumulative} incarnation of \aacbr} 
which satisfies cautious monotonicity.

\subsubsection{Preliminaries.} Firstly, {let us present some general notions, defined in terms of the $\cinfers$ inference relation} from an arbitrary classifier $\learn$.

Intuitively, we {are after} a relation \(\cinfers'\) such that if \(D \cinfers c\) and \(D \cinfers d\), then \(D\cup\{c\} \cinfers' d\) {(in our concrete setting, $\cinfers=\infers_{\paacbr}$ and $\cinfers'=\infers_{\caacbr}$)}. We also want the property that, whenever \(D\) is ``well-behaved'' (in a sense to be made precise later), \(D \cinfers {s}\) iff \(D \cinfers' {s}\). In this way, given that \(D \cinfers' c\) and \(D \cinfers' d\), then we would conclude \(D\cup\{c\} \cinfers' d\), making \(\cinfers'\) a cautious monotonic relation.

We will {define $\cinfers'$} by building a subset of the original dataset in such a way that cautious monotonicity is preserved. We start with the following {notion of \emph{(un)surprising} {examples}}:

\begin{definition}
An example $(x,y) \in X \times Y$ is \emph{unsurprising} {(or \emph{not surprising}) \wrt\ $D$} iff $D\setminus\{(x,y)\} 
	\cinfers (x,y)$. Otherwise, $(x,y)$ is called \emph{surprising}.
\end{definition}

We then define the notion of \emph{concise} (subset of) the 
dataset, {amounting to surprising cases only \wrt\ the dataset:} 

\begin{definition} \label{def:concise}
	Let $S \subseteq X \times Y$ be a dataset, 
$S' \subseteq S$, and let $\phi(S') = \{(x,y) \in S \mid (x,y) \text{ is surprising \wrt\ } S' \}$.  Then $S'$ is \emph{concise \wrt\ $S$}  whenever it is a fixed point of $\phi$, that is, $\phi(S') = S'$. \end{definition}

To illustrate this notion {in the context of \aacbr}, consider {the dataset $S$ from which the AF} in Figure \ref{fig:n_2_alt} {is drawn}. $S$ is not concise {\wrt\ itself}, since $\caseset[a,b,c]{+}$ is unsurprising \wrt\ $S$ {(indeed, $S\setminus \{\caseset[a,b,c]{+}\} \infers_{\paacbr} (\{a,b,c\},+)$, see Figure~\ref{fig:n_1})}. Also, $S' = S \setminus \{\caseset[a,b]{-}, \caseset[a,b,c]{+}\}$ {is not concise either (\wrt\ $S$)}, as $\caseset[a,b]{-}$ is surprising \wrt\ $S'$ (the predicted outcome being $+$), but not an element of $S'$. The only concise subset of $S$ in this example is thus $S'' = S \setminus \{\caseset[a,b,c]{+}\}$.

{Let us now consider 
$D'\subseteq D$, for $D$ the dataset underpinning our $\cinfers$.} If \(D'\) is concise \wrt\ \(D\), \((x,y) \in (X\times Y) \setminus D\) is an example not in $D$ already and \(D' \cinfers (x,y) \), then \((x,y)\) is unsurprising \wrt\ \(D'\), and thus \(D'\) is still concise \wrt\ \(D \cup \{(x,y)\}\). 
Now, \emph{suppose that there is exactly one such concise} $D'\subseteq D$ {\wrt\ $D$}  {(let us refer to this subset simply as $concise(D)$).}
  Then, it seems attractive to define \(\cinfers'\), as: \(\FT{D} \cinfers' (x,y)\) iff \(concise(\FT{D}) \cinfers (x,y)\). Such \(\cinfers'\) inference relation would then be cautiously monotonic if \(concise(\FT{D}) = concise(\FT{D} \cup \{(x,y)\})\). 
  \FT{This identity is indeed guaranteed given that} 
a concise subset of \(\FT{D}\) is still a concise subset of \(\FT{D} \cup \{(x,y)\}\), and \FT{given our assumption that} there is a unique concise subset of \(\FT{D}\}\).
\FT{In the remainder of this section we will prove uniqueness and (constructively) existence of $concise(D)$}
in the case of $\PAACBR$.  

\begin{algorithm*}[thtb]
  \KwIn{An $\PAACBR$ framework $\AF{\Args}{\attacks}$ and a case $n = \case{n_c}{n_o}$}
  \KwOut{A new $\PAACBR$ $\AF{\Args'}{\attacks'}$ framework}

	$DEF \longleftarrow \{(x,y) \in \aaFtwo{Args}{ n_C} \mid (x,y) \neq \newcasearg[n] \text{ and } \newcasearg[n] \text{ defends } (x,y) \text{ in } \aaFtwo{Args}{ n_c} \}$ \;

  $\Args' \longleftarrow \Args \cup \{n\}$ \;
  $\attacks' \longleftarrow (\attacks \cup \{ (n,a) \mid a = (a_c, a_o), a \in DEF, \text{ and } a_o \neq n_o \})$ \;
\KwRet{$\AF{\Args'}{\attacks'}$}

\caption{$simple\_add$ algorithm for $\PAACBR$.}
  \label{algo:simpleadd}
\end{algorithm*}

\begin{algorithm*}[thb]
  \KwIn{A dataset $D$}
  \KwOut{An AF $\cAACBR(D)$}

  $unprocessed \longleftarrow$ $D$ \;
  $Args_{current} \longleftarrow \{\defcase\}$ \;
  $\attacks_{current} \longleftarrow \emptyset$ \;

  \While{$unprocessed \neq \emptyset$}{
$stratum \longleftarrow \{(x,y) \in unprocessed \mid (x,y) \text{ is } \pleq\text{-minimal in } unprocessed \}$ \;
    $unprocessed \longleftarrow unprocessed \setminus stratum$ \;
    $to\_add \longleftarrow \emptyset$ \;
    \For{$next\_case \in stratum$}{
      $(case\_characterisation, case\_outcome) \longleftarrow next\_case$ \;
      \If{the outcome for $case\_characterisation$ \wrt\ $(\Args_{current}, \attacks_{current})$ is not $case\_outcome$}{
        $to\_add \longleftarrow to\_add \cup \{next\_case\}$ \;
      }}
    \For{$next\_case \in to\_add$}{
      $(Args_{current},\attacks_{current}) \longleftarrow simple\_add((Args_{current},\attacks_{current}), next\_case)$ \;
    }
  }
  \KwRet{$(Args_{current},\attacks_{current})$}

  \caption{Setup/learning algorithm for $\cAACBR$.}
  \label{algo:cumulaacbr}
\end{algorithm*}

\subsubsection{Uniqueness of concise subsets in \texorpdfstring{$\bm{\pAACBR}$}{$\pAACBR$}.}

\begin{theorem} \label{theo:unique-concise}
Given a {\coherent} dataset $D$, if there exists  a concise $D' \subseteq D$ \wrt\ $D$ then  $D'$ is unique.
\end{theorem}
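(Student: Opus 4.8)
The plan is to argue by contradiction using the locality of $\paacbr$-predictions guaranteed by Lemma~\ref{lemma:lesser}. Suppose $D_1'$ and $D_2'$ are both concise \wrt\ $D$ but $D_1' \neq D_2'$, and consider their symmetric difference $\Delta = (D_1' \setminus D_2') \cup (D_2' \setminus D_1') \subseteq D$, which is finite and nonempty. Since $\pl$ is a strict partial order and $\Delta$ is finite, I would pick a $\pleq$-minimal element $(x,y) \in \Delta$ (minimal in its characterisation), and, by symmetry, assume \wlog\ that $(x,y) \in D_1' \setminus D_2'$. Unfolding Definition~\ref{def:concise} and the definition of (un)surprising then gives two facts to be placed in conflict: because $(x,y) \in D_1'$ and $D_1'$ is concise, $(x,y)$ is surprising \wrt\ $D_1'$, i.e.\ $\paacbr(D_1' \setminus \{(x,y)\}, x) \neq y$; whereas $(x,y) \in D \setminus D_2'$ and $D_2'$ concise forces $(x,y)$ to be unsurprising \wrt\ $D_2'$, i.e.\ $\paacbr(D_2', x) = y$.

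The heart of the argument is to show these two predictions must in fact coincide, contradicting $y \neq y$. Here I would exploit minimality of $(x,y)$: no element of $\Delta$ has characterisation strictly below $x$, so $D_1'$ and $D_2'$ contain exactly the same cases with characterisation $\pl x$; call this common set $E$. Using \coherence\ of $D$ (hence of its subsets) I would argue that the only case in $D_1'$ with characterisation equal to $x$ is $(x,y)$ itself, and that $D_2'$ contains no case with characterisation $x$ at all. Consequently the ``relevant cores'' in the sense of Lemma~\ref{lemma:lesser} agree: both $\{\alpha \in D_1' \setminus \{(x,y)\} \mid \alpha \pleq x\}$ and $\{\alpha \in D_2' \mid \alpha \pleq x\}$ equal $E$. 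Applying Lemma~\ref{lemma:lesser} (with $\newcasecharac = x$) to each of $D_1'\setminus\{(x,y)\}$, $E$, and $D_2'$ yields $\paacbr(D_1' \setminus \{(x,y)\}, x) = \paacbr(E,x) = \paacbr(D_2', x)$, which is the desired contradiction, so $\Delta = \emptyset$ and $D_1' = D_2'$.

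The main obstacle I anticipate is the careful bookkeeping around the characterisation $x$ when passing between $D_i'$ and the core $E$: I must ensure that deleting $(x,y)$ from $D_1'$ removes every case at characterisation $x$ (so that the restriction to $\pleq x$ genuinely collapses to $E$), and likewise that $(x,y)\notin D_2'$ together with \coherence\ leaves $D_2'$ with no case at characterisation $x$. Both of these rely essentially on \coherence\ — without it, two cases could share characterisation $x$ with opposite outcomes and the reduction to the common $E$ would fail — which is exactly why the hypothesis that $D$ is \coherent\ is needed. The rest of the reasoning (existence of a $\pleq$-minimal element, the $\wlog$ by symmetry, and the invocation of Lemma~\ref{lemma:lesser}) is routine once this localisation is set up correctly.
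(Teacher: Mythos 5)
Your proposal is correct and follows essentially the same route as the paper's own proof: contradiction via a $\pleq$-minimal element of the symmetric difference, agreement of the sub-$x$ cases by minimality, and Lemma~\ref{lemma:lesser} together with \coherence\ to force the two surprise verdicts to coincide. If anything, your write-up is more explicit than the paper's about why \coherence\ eliminates any other case with characterisation exactly $x$, a point the paper leaves implicit.
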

\begin{proof}
	By contradiction, let $D''$ be a concise subsets of $D$ \FT{distinct from $D'$}.
	Let then $(x,y) \in (D' \setminus D'') \cup (D'' \setminus D')$ such that $(x,y)$ is \FT{$\pleq$-}minimal in this set. Then the sets $\{(x',y') \in D' \mid (x',y') \pl (x,y) \}$ and $\{(x',y') \in D'' \mid (x',y') \pl (x,y) \}$ 
are equal, otherwise $(x,y)$ would not be minimal. 
But then, since $D$ is \coherent, by Lemma \ref{lemma:lesser} we can conclude that $D'\setminus\{(x,y)\} \inferspaacbr (x,y)$ iff $D''\setminus\{(x,y)\} \inferspaacbr (x,y)$. Thus, $(x,y)$ is surprising \wrt\ both $D'$ and $D''$ or \wrt\ neither. But since it is an element of one but not the other, one of them is either missing a surprising element or containing a non-surprising element. Such a set is not concise, contradicting our initial assumption. \end{proof}

\subsubsection{Existence of concise subsets in \texorpdfstring{$\bm{\pAACBR}$}{$\pAACBR$}.}
\FT{We have proven that $concise(D)$ is unique, if it exists. Here we prove that existence is guaranteed too. We do so constructively, and by doing do we also prove that our  approach is practical, giving as we so}
a (reasonable) algorithm that finds the concise subset of \(D\).

The main idea \FT{behind the algorithm is} simple: we start with the default argument, and progressively build the argumentation framework by adding cases from \FT{$D$} by following the partial order $\pleq$. Before adding a past case, we test whether it is surprising or not \wrt\ \FT{the dataset underpinning the current AF}: if it is, then it is added; otherwise, it is not added.
\FT{More specifically, the algorithm works with strata over $D$, alongside $\pleq$.}
\FT{In the simplest setting where each stratum is a singleton, the algorithm words as follows:}starting with \(D_0 = \{\defcase\}\) and the entire dataset $D = \{d_i\}_{i \in \{1, \dots, \card{D}\}}$ unprocessed, at each step \(i+1\), we \FT{obtain} either \(D_{i+1} = D_i \cup \{d_{i+1}\}\), if \(d_{i+1}\) is surprising \wrt\ \(D_{i}\), 
and \(D_{i+1} = D_i\), otherwise. Then \(\hat{D} = D_{\card{D}} \subseteq D\) is the result of the algorithm. 
\FT{In the general case,} each example \FT{of the current stratum}  is tested for ``surprise'', and only the surprising examples are added to $D_i$. The procedure is formally stated in Algorithm \ref{algo:cumulaacbr}\FT{, using in turn Algorithm~\ref{algo:simpleadd}}. We illustrate the application of the algorithms next.

\begin{example} 
 Once more consider the dataset \(D {= \{\caseset[a]{+}, \caseset[c]{+},
          \caseset[a,b]{+}, \caseset[c,z]{+}, \caseset[a,b,c]{+}\}}\) 
	  in Figure \ref{fig:n_2_alt}, as well as the definitions used in that example for $X$, $Y$, $\defcase$ and $\pleq$. Let us examine the application of Algorithm \ref{algo:cumulaacbr} to it. We start with an AF consisting only of $\defcase$, that is, $D_0 = \emptyset$, $AF_0 = \aaFone{D_0} = \aaFone{\emptyset} = \AF{\{\defcaseset\}}{\emptyset}$. The first stratum would consist of $stratum_1 = \{\caseset[a]{+},\caseset[c]{+}\}$. Of course, then, we have $\paacbr(\{\defcaseset\}, \caseset[a]{?}) = -$, and similarly for $\caseset[c]{?}$. Thus, every argument in $stratum_1$ is surprising, and are thus included in the next $AF$, resulting in $D_1 = \caseset[a]{+},\caseset[c]{+}$ and $AF_1 = \aaFone{D_1}$.

        {Now, the second stratum is $stratum_2 = \{\caseset[a,b]{-},\caseset[c,z]{-}\}$. We can verify that $\paacbr(D_1, \caseset[a,b]{?}) = +$ and $\paacbr(D_1, \caseset[c,z]{?}) = +$. Thus $\caseset[a,b]{-}$ and $\caseset[c,z]{-}$ are both surprising, and then included in next step, that is, $D_2 = D_1 \cup \{\caseset[a,b]{-},\caseset[c,z]{-}\}$, and $AF_2 = \aaFone{D_2}$.}

        Finally, $stratum_3 = \{\caseset[a,b,c]{+}\}$. Now we verify that $\paacbr{D_2, \caseset[a,b,c]{+}} = +$, which means that $\caseset[a,b,c]{+}$ is unsurprising. Therefore it is \emph{not} added in the argumentation framework, that is, $D_3 = D_2$ and thus $AF_3 = \aaFone{D_3} = \aaFone{D_2} = AF_2$. Now $unprocessed = \emptyset$, and the selected subset if $D_3$, with corresponding $aaFone{D_3} = AF_3$, and we are done. We can check that using $\caacbr$ the counterexample in the proof of Theorem~\ref{theo:aacbr-not-caut-mono} would fail, since $\caseset[a,b,c]{+}$ would not have been added to the AF.
\end{example}

Notice that we could have defined the algorithm equivalently by looking at cases one-by-one rather than grouping them in strata.
However, using strata has the advantage of allowing for parallel testing of new cases. 

\begin{theorem}[Convergence]
  Algorithm \ref{algo:cumulaacbr} converges.
\end{theorem}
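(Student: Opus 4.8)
The plan is to establish termination by exhibiting a strictly decreasing, well-founded measure on the algorithm's state, namely the cardinality of the set $unprocessed$. Since a casebase $D$ is, by Definition~\ref{dear-miner}, a finite set, $\card{D}$ is finite and $unprocessed$ remains a finite set throughout the execution.

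First I would argue that the outer \textbf{while} loop terminates. The crucial observation is that whenever $unprocessed \neq \emptyset$, the computed $stratum$ of $\pleq$-minimal elements is non-empty: $unprocessed$ is a finite non-empty set equipped with the partial order $\pleq$, and every finite non-empty partially ordered set has at least one minimal element. Since the body then sets $unprocessed \longleftarrow unprocessed \setminus stratum$ with $stratum \neq \emptyset$ and $stratum \subseteq unprocessed$, the value $\card{unprocessed}$ strictly decreases by $\card{stratum} \geqslant 1$ at every iteration. As $\card{unprocessed}$ is a non-negative integer starting at $\card{D}$, the loop guard $unprocessed \neq \emptyset$ fails after at most $\card{D}$ iterations.

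Next I would verify that each iteration of the loop body itself performs only finitely many steps. The two \textbf{for} loops range over $stratum$ and $to\_add$ respectively, both finite subsets of $D$. The only non-trivial sub-computations are the surprise test --- which amounts to computing the outcome for a characterisation \wrt\ the current AF --- and the call to $simple\_add$ (Algorithm~\ref{algo:simpleadd}). Each of these reduces to computing a grounded extension of a finite AF; and for a finite AF the construction $\groundext = \bigcup_{i \geqslant 0} G_i$ stabilises after finitely many stages, because $G_0 \subseteq G_1 \subseteq \cdots$ is an increasing chain of subsets of the finite argument set and hence becomes constant once no new argument is added. Thus every surprise test and every invocation of $simple\_add$ terminates, and each loop-body iteration performs a finite amount of work.

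Combining these observations yields termination: finitely many outer iterations, each consisting of finitely many inner steps. The main point requiring care --- the only place where the argument could break --- is the non-emptiness of $stratum$, since an empty stratum in some iteration would leave $\card{unprocessed}$ unchanged and allow the loop to stall; this is precisely what the existence of a $\pleq$-minimal element in any finite non-empty poset rules out. I do not expect coherence of $D$ to be needed for convergence alone, as finiteness of $D$ together with finiteness of each generated AF already suffices.
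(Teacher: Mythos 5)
Your proof is correct and follows essentially the same argument as the paper's: the stratum of $\pleq$-minimal elements is non-empty because every finite non-empty poset has a minimal element, so $\card{unprocessed}$ strictly decreases at each iteration of the \textbf{while} loop. The extra verification that each loop body performs only finitely many steps (finiteness of the grounded-extension computation) is a reasonable addition of detail that the paper leaves implicit, but it does not change the substance of the argument.
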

\begin{proof}
	\FT{Obvious,} since at each iteration of the \texttt{while} loop, the variable $stratum$ is \FT{assigned to} a non-empty set, due to the fact that $unprocessed$ is always a finite set, and thus there is always at least one minimal element. Thus, the cardinality of $unprocessed$ is reduced by at least $1$ at each loop iteration, which guarantees that it will eventually become empty.
\end{proof}

\begin{theorem}[Correctness of Algorithm \ref{algo:simpleadd}] \label{theo:correctness-add}
	Every execution of $simple\_add((Args,\attacks), next\_case)$ (Algorithm \ref{algo:simpleadd}) in Algorithm \ref{algo:cumulaacbr} correctly returns  $\aaFone{Args \cup \{next\_case\}}$.
\end{theorem}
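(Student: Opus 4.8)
The plan is to prove the two frameworks are identical by showing their argument sets and their attack relations coincide. The argument sets match trivially, since both are $Args \cup \{n\}$. For the attacks, write $D = Args \setminus \{\defcase\}$ for the casebase underlying the input; by induction over the successive $simple\_add$ calls made inside Algorithm~\ref{algo:cumulaacbr} (the base framework being $(\{\defcase\},\emptyset) = \aaFone{\emptyset}$, and each call feeding the next), I may assume the input satisfies $(Args,\attacks) = \aaFone{D}$. I would then classify the attacks of the target $\aaFone{D \cup \{n\}}$ into three groups — (i) attacks between two arguments of $Args$, (ii) attacks into $n$, and (iii) attacks out of $n$ — and match each group against the relation $\attacks'$ produced by $simple\_add$.

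Groups (i) and (ii) are handled by the stratification discipline of Algorithm~\ref{algo:cumulaacbr}. Since $n$ is drawn from the current stratum, whose members are $\pleq$-minimal among the still-unprocessed cases, every $\alpha \in Args$ was added in an earlier stratum (hence $\alpha$ is $\pleq$-below $n_C$) or lies in the same stratum (hence incomparable to $n_C$); in either case $\alpha_C \pg n_C$ fails. For group (ii), an attack $\alpha \attacks n$ would require $\alpha_C \pgeq n_C$, which together with $\alpha_C \neq n_C$ (distinctness of characterisations in the coherent $D$) forces the impossible $\alpha_C \pg n_C$; so $n$ is unattacked, matching the fact that $simple\_add$ adds no incoming attacks. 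For group (i), for $n$ to block an existing attack $\alpha \attacks \caseii$ via requirement~3 one would again need $\alpha_C \pg n_C \pg \charac{\caseii}$, impossible; and since adjoining an argument can only delete attacks (by supplying a fresh intermediate blocker) and never create them, the attacks among $Args$ are left exactly as in the input — matching the algorithm's retention of $\attacks$.

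Group (iii) is the heart of the proof, and here I would invoke Lemma~\ref{lemma:attacked-entering-case} with $D$, $N_1$, $o_1$ instantiated to $D$, $n_C$, $n_o$ (its hypothesis that no case in $D$ has characterisation $n_C$ holds by distinctness of characterisations and $n \notin Args$). First I observe that attacks between two past-case arguments are insensitive to the presence of a new-case argument, so $\case{n_C}{n_o} \attacks \caseii$ holds in $\aaFone{D \cup \{n\}}$ exactly when it holds in the lemma's $AF_2 = \aaFtwo{D \cup \{n\}}{N_2}$ for an arbitrary admissible $N_2$. Then part~2 of the lemma gives the forward direction (if $n \attacks \caseii$ in the target then $\outcome{\caseii} \neq n_o$ by the attack definition, and $\newcasearg[n]$ defends $\caseii$ in $\aaFtwo{D}{n_C} = \aaFtwo{Args}{n_C}$, i.e.\ $\caseii \in DEF$), while part~4 gives the converse (if $\caseii \in DEF$ and $\outcome{\caseii} \neq n_o$ then $n \attacks \caseii$). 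Hence the set of outgoing attacks of $n$ in $\aaFone{D \cup \{n\}}$ is precisely $\{(n,a) \mid a \in DEF,\ a_o \neq n_o\}$, which is exactly what $simple\_add$ adjoins.

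Two points need care, and they locate the main obstacle. First, the lemma is stated for $\caseii \in D$ and so formally omits $\defcase$; I would note that its proofs of parts~1, 2 and~4 use only that $\defcharac$ is the least element and the regularity of the framework, both of which hold for $\defcase$, so the equivalence extends verbatim to $\caseii = \defcase$. Second, and this is the real content, the whole group~(iii) argument is packaged inside Lemma~\ref{lemma:attacked-entering-case}: establishing that ``$\newcasearg[n]$ defends $\caseii$ before adding $n$'' is the exact criterion for ``$n$ attacks $\caseii$ after adding $n$'' is delicate precisely because of requirement~3 (concision) of the attack definition. Everything else — the untouched old attacks, $n$ being unattacked, and the matching argument sets — follows from the stratification bookkeeping above, yielding $\attacks' = {\attacks}^{*}$ and hence the claimed equality with $\aaFone{Args \cup \{n\}}$.
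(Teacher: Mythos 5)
Your proposal is correct and follows essentially the same route as the paper's own (sketched) proof: both obtain the outgoing attacks of $n$ from parts 2 and 4 of Lemma~\ref{lemma:attacked-entering-case} (whose hypothesis applies because no case in $\Args$ can share the characterisation $n_C$), and both use the fact that cases are inserted respecting $\pleq$ to conclude that $n$ receives no attacks and that no attack among the old arguments needs to be added or removed (since blocking via requirement 3 of the attack definition would require some $\alpha_C \pg n_C$, which is impossible). Your write-up fills in details the paper leaves implicit---notably the three-way decomposition of attacks and the extension of the lemma to the default argument; the only slip is the parenthetical claim that cases from earlier strata are $\pleq$-below $n_C$ (they may instead be incomparable to it), but the fact you actually use, namely that no $\alpha \in \Args$ satisfies $\alpha_C \pg n_C$, still follows from $\pleq$-minimality of each case at its selection time.
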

\begin{proof}[Proof (sketch)]
  This is essentially a consequence of Lemma \ref{lemma:attacked-entering-case}. We know that there will never be an argument in $\Args$ with the same characterisation as $next\_case$, since they will occur in the same stratum, thus the lemma applies. The lemma guarantees that Algorithm \ref{algo:simpleadd} adds all attacks that need to be added {and only those}. 
  Finally, we need to check that it will never be necessary to remove an attack. This is true due to the {\concision}, and since arguments are added following the partial order.
Therefore the only modifications on the set of attacks are the ones in $simple\_add$.
\end{proof}

\begin{theorem}[Correctness of Algorithm \ref{algo:cumulaacbr}]
	If the input dataset is \coherent, then the \FT{dataset underpinning the AF resulting from} Algorithm \ref{algo:cumulaacbr} is concise.
\end{theorem}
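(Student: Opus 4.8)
The plan is to show that the output dataset $\hat D$, i.e.\ the set of cases actually added to the AF by Algorithm~\ref{algo:cumulaacbr}, is a fixed point of $\phi$: by Definition~\ref{def:concise} (with $S=D$ and $S'=\hat D$) this means proving $\phi(\hat D)=\hat D$, or equivalently that a case $(x,y)\in D$ is added by the algorithm \emph{iff} it is surprising \wrt\ $\hat D$. The one subtlety is that the algorithm decides to add $(x,y)$ by testing it for surprise against the \emph{partially built} casebase available when $(x,y)$ is processed, whereas conciseness refers to surprise against the \emph{final} casebase $\hat D$. The heart of the proof is reconciling these two notions of surprise, and this is exactly where Lemma~\ref{lemma:lesser} and \coherence\ are used.

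First I would fix bookkeeping over the strata. For a case processed in a given iteration of the \texttt{while} loop, write $D_{<}$ for the set of cases added in strictly earlier iterations (equivalently, the cases in $\Args_{current}\setminus\{\defcase\}$ at the start of that iteration). By induction on the loop, using Theorem~\ref{theo:correctness-add} for each $simple\_add$ call, the invariant $(\Args_{current},\attacks_{current})=\aaFone{D_{<}}$ holds at the start of every iteration; hence the internal test ``the outcome for $case\_characterisation$ \wrt\ $(\Args_{current},\attacks_{current})$ is not $case\_outcome$'' is precisely the test $\paacbr(D_{<},x)\neq y$ for $next\_case=(x,y)$. I would also record two consequences of the stratification: each $stratum$ consists of $\pleq$-minimal elements of the remaining unprocessed cases, so any two cases in the same stratum have $\pleq$-incomparable characterisations; and whenever $\charac\alpha\pl \charac\beta$ for $\alpha,\beta\in D$, the case $\alpha$ is assigned to a strictly earlier stratum than $\beta$. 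In particular, when $(x,y)$ is tested, every $\gamma\in D$ with $\charac\gamma\pl x$ has already been processed (and so belongs to $D_{<}$ iff it was added to $\hat D$).

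The key step is to apply Lemma~\ref{lemma:lesser}. Fix $(x,y)$ processed with current casebase $D_{<}$, and write $(\cdot)_x$ for the restriction of a dataset to its cases with characterisation $\pleq x$. I claim $(D_{<})_x=(\hat D\setminus\{(x,y)\})_x$. Since $D$ is \coherent, $(x,y)$ is the only case in $D$ with characterisation exactly $x$, so neither restriction contains a case with characterisation $x$; both therefore consist solely of cases with characterisation $\pl x$. By the stratification above, every such case lies in a strictly earlier stratum, whence it belongs to $\hat D$ iff it belongs to $D_{<}$. Thus the two restrictions coincide, and Lemma~\ref{lemma:lesser} yields $\paacbr(D_{<},x)=\paacbr(\hat D\setminus\{(x,y)\},x)$. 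Note this also shows the batching of a whole stratum is harmless: same-stratum cases are $\pleq$-incomparable to $x$, hence irrelevant to the prediction for $x$, so it is immaterial whether they are added before or after $(x,y)$ is tested.

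Putting the pieces together finishes the proof: $(x,y)$ is added by Algorithm~\ref{algo:cumulaacbr} iff $\paacbr(D_{<},x)\neq y$ iff $\paacbr(\hat D\setminus\{(x,y)\},x)\neq y$ iff $(x,y)$ is surprising \wrt\ $\hat D$. Hence $\hat D=\{(x,y)\in D\mid (x,y)\text{ is surprising \wrt\ }\hat D\}=\phi(\hat D)$, so $\hat D$ is concise \wrt\ $D$, as required. The main obstacle is the claim of the previous paragraph: correctly arguing that the prediction made at testing time (against only the kept earlier-stratum cases) equals the prediction against the entire final casebase minus the tested case. This needs \coherence\ to exclude a same-characterisation case elsewhere in $D$, the monotone stratification to ensure every strictly-more-general case is already decided, and Lemma~\ref{lemma:lesser} to discard the incomparable and more-specific cases that the two casebases may not share.
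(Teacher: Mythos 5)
Your proof is correct and rests on exactly the same ingredients as the paper's own argument: Lemma~\ref{lemma:lesser}, the stratification (ensuring every case $\pl$-below a tested case is already settled), \coherence, and within-stratum incomparability. The only difference is organisational --- the paper runs an induction over the \texttt{while}-loop iterations maintaining the invariant that the current set of added cases is concise \wrt\ the set of all seen examples, whereas you verify the fixed-point identity $\phi(\hat{D})=\hat{D}$ in one pass by equating each test-time prediction (against $D_{<}$) with the prediction against $\hat{D}\setminus\{(x,y)\}$; this is the same argument read backwards rather than forwards, and it usefully spells out the restriction-equality step that the paper's sketch leaves implicit.
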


\begin{proof}[Proof (sketch)]
	In order to prove that, \FT{for the returned $\Args_{current}$},  $\Args_{current}\FT{\setminus\{\defcase\}}$ is concise, we just need to prove that at the end of each loop $\Args_{current}\FT{\setminus\{\defcase\}}$ is concise \wrt\ \FT{the set of all seen examples}. 
	
	As  the base case, before \FT{the loop is entered,} this is clearly the case, as the only seen argument is the default.

  As the induction step, we know that every case previously added is still surprising, since the new cases added are not smaller than them according to the partial order, and thus by Lemma \ref{lemma:lesser} their prediction is not changed, that is, they keep being surprising. The same is true for every case previously not added: adding more cases afterwards does not change their prediction. For the cases added at this new iteration, by definition the surprising ones are added and the unsurprising ones are not.
Regarding the order in which cases of the same stratum are added, each of the surprising cases will be included and the unsurprising ones will not be. It can be seen that the order is irrelevant as, since they are all $\pleq$-minimal and the dataset is {\coherent}, they are incomparable, so each case in the list is irrelevant with respect to the other.
Thus, for every case seen until this point, it is in the AF iff it is surprising. As this is true for every iteration, it is true for the final, returned AF.
\end{proof}

A full complexity analysis of the algorithm is outside the scope of this paper. However, notice here that the algorithm refrains from building the AF from 
scratch each time a new case is considered,  as seen in Theorem~\ref{theo:correctness-add}.
Still regarding Algorithm~\ref{algo:simpleadd}, notice that it is
easy to compute the set DEF while checking whether the next case
is surprising or not,  thus we could optimise its implementation
with the use of caching. Besides, the subset of minimal cases (that is, the stratum)
can be extracted efficiently by representing the partial order
as a directed acyclic graph and traversing this graph.
Finally, as mentioned before, the order in which the cases
in the same stratum are added does not affect the outcome.
Thus, each case in the same stratum can be safely tested for
surprise in parallel.

\subsubsection{\texorpdfstring{$\bm{\cAACBR}$}{$\cAACBR$}.}

All theorems in this section so far lead to the following corollary:

\begin{corollary}
Given a \coherent\  dataset $D$, the dataset underpinning the AF resulting 
	from Algorithm \ref{algo:cumulaacbr} is the unique concise $D'\subseteq D$, \wrt\ $D$.
\end{corollary}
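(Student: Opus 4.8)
The plan is to read this off as an immediate combination of the results already established in this section, treating it as the assembly of \emph{existence} and \emph{uniqueness} of the concise subset. The constructive content comes from the algorithm's correctness, and the uniqueness from Theorem~\ref{theo:unique-concise}; there is essentially nothing new to prove beyond the careful matching of statements.

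First I would fix notation by letting $\hat D$ denote the dataset underpinning the AF returned by Algorithm~\ref{algo:cumulaacbr} on input $D$, i.e. $\hat D = \Args_{current}\setminus\{\defcase\}$ for the returned $\Args_{current}$. I would note two bookkeeping facts that make $\hat D$ a legitimate candidate for ``the concise subset of $D$'': (i) the Convergence theorem guarantees that the \texttt{while} loop terminates, so $\hat D$ is well defined; and (ii) $\hat D \subseteq D$, since the only cases ever added to $Args_{current}$ (via $simple\_add$, whose correctness is Theorem~\ref{theo:correctness-add}) are drawn from the successive strata of $D$, and the default argument $\defcase$ is explicitly removed. This settles that $\hat D$ is a subset of $D$ rather than an arbitrary set.

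Next I would invoke the correctness theorem for Algorithm~\ref{algo:cumulaacbr}: since $D$ is \coherent, $\hat D$ is concise \wrt\ $D$. This already discharges \emph{existence} of a concise $D'\subseteq D$. I would then apply Theorem~\ref{theo:unique-concise}, which asserts that \emph{if} a concise subset of a \coherent\ $D$ exists then it is unique. Combining the two, $\hat D$ is concise \wrt\ $D$ and any concise subset must coincide with it; hence $\hat D$ is \emph{the} unique concise $D'\subseteq D$ \wrt\ $D$, which is exactly the claim.

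Since the corollary is really just a conjunction of two previously proven facts, I do not expect a genuine obstacle; the only point requiring care is the alignment of the ``dataset underpinning the AF'' with the notion of concise subset, namely that concision is a property of a subset of $X\times Y$ whereas the algorithm maintains an AF whose argument set always contains $\defcase$. I would therefore be explicit that concision is measured on $\Args_{current}\setminus\{\defcase\}$, matching the convention used in the correctness proof, so that the application of Theorem~\ref{theo:unique-concise} is to the same object the algorithm constructs.
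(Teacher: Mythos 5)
Your proof is correct and matches the paper's intent exactly: the paper presents this corollary as an immediate consequence (``All theorems in this section so far lead to the following corollary''), i.e.\ existence via the convergence and correctness theorems for Algorithm~\ref{algo:cumulaacbr} and uniqueness via Theorem~\ref{theo:unique-concise}. Your additional care in identifying the concise subset with $\Args_{current}\setminus\{\defcase\}$ is a faithful spelling-out of the same argument, not a different route.
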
 

To conclude, we can then define inference in $\caacbr$, the classifier yielded by the strategy described until now:

\begin{definition}
	Let $D$ be a \coherent\ dataset and let $concise(D)$ be the unique concise subset of $D$, \wrt\ $D$. 
	Let  $\caaDN$ be the  AF mined from  $concise(D)$ and $\newcasearg$, with  default argument $\defcase$.  Then, $\caacbr(D,\newcasecharac)$ stand for the outcome for $\newcasecharac$, given $\caaDN$. 
\end{definition}
Thus, 	we directly obtain the inference relation $\infers_{\paacbr}$.

	Then, \caacbr\ amounts to the form of \aacbr\ using this inference relation. It is easy to see, in line with the discussion before Theorem~\ref{theo:unique-concise}, and using the results in Section~\ref{theo:cm-cut}, that \caacbr\ satisfies several non-monotonicity properties, as follows:

	\begin{theorem}
		$\infers_{\caacbr}$ is cautiously monotonic and also satisfies cut, cumulativity, and rational monotonicity.
	\end{theorem}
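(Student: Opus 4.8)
The plan is to reduce everything to a single property. Since $\caacbr$ is, for coherent datasets, a total classifier (its output $\paacbr(concise(D),\cdot)$ is well defined because $concise(D)$ exists and, by Theorem~\ref{theo:unique-concise}, is unique), the inference relation $\inferscm$ falls squarely under Definition~\ref{def:non-monot-analysis}. Hence parts 3--5 of Theorem~\ref{theo:cm-cut} apply to it verbatim: $\inferscm$ is cautiously monotonic iff it satisfies cut, iff it is cumulative, iff it satisfies rational monotonicity. So it suffices to prove that $\inferscm$ is cautiously monotonic, and the remaining three properties come for free.

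For cautious monotonicity I would fix a coherent $D$, a positive atom $(x,y)\in X\times Y$ with $D \inferscm (x,y)$ (equivalently $\paacbr(concise(D),x)=y$), and an arbitrary $b\in\lang$ with $D \inferscm b$; the goal is $D\cup\{(x,y)\}\inferscm b$. The case $(x,y)\in D$ is trivial, so assume $(x,y)\notin D$. The whole argument then rests on the single identity $concise(D\cup\{(x,y)\})=concise(D)$: granting it, $\caacbr(D\cup\{(x,y)\},\cdot)$ and $\caacbr(D,\cdot)$ are the same function of the query, so $D\cup\{(x,y)\}\inferscm s$ iff $D\inferscm s$ for every $s\in\lang$, whence $D\cup\{(x,y)\}\inferscm b$ follows immediately, whether $b$ is a positive atom or a negation.

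To establish the identity I would proceed in two steps, following and completing the informal discussion preceding Theorem~\ref{theo:unique-concise}. First, I would check that $D\cup\{(x,y)\}$ is coherent, so that $concise$ is even defined on it: if some $(x,y')\in D$ with $y'\neq y$ existed, then $(x,y')\notin concise(D)$, since otherwise it would be the unique nearest case to $x$ and Theorem~\ref{theo:nearest_neighbours} would force $\paacbr(concise(D),x)=y'\neq y$; but an element of $D\setminus concise(D)$ is unsurprising \wrt\ $concise(D)$, i.e.\ $\paacbr(concise(D),x)=y'$, the same contradiction. Second, writing $D'=concise(D)$ and $S=D\cup\{(x,y)\}$, I would show that $D'$ is a fixed point of the operator $\phi$ of Definition~\ref{def:concise} for $S$: every element of $D$ has the same surprise status \wrt\ $D'$ as before and so these contribute exactly $D'$, while the only new candidate $(x,y)$ is unsurprising \wrt\ $D'$ because $(x,y)\notin D'$ gives $D'\setminus\{(x,y)\}=D'$ and $\paacbr(D',x)=y$, so $(x,y)\notin\phi(D')$; hence $\phi(D')=D'$. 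Thus $D'$ is concise \wrt\ $S$, and by the uniqueness of Theorem~\ref{theo:unique-concise}---now legitimately applicable since $S$ is coherent---together with $D'$ witnessing existence, $concise(S)=D'=concise(D)$.

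The main obstacle is precisely the coherence step: the informal discussion assumes one may re-add a predicted case without creating a clash, but this needs justifying, and it is exactly where Theorem~\ref{theo:nearest_neighbours} and the fixed-point characterisation of $concise(D)$ must be combined. Everything after that is routine bookkeeping with $\phi$ plus an invocation of the already-proven uniqueness; in particular, no new argument about $\paacbr$ or the grounded extension is required beyond Lemma~\ref{lemma:lesser} (used inside Theorem~\ref{theo:unique-concise}) and Theorem~\ref{theo:nearest_neighbours}.
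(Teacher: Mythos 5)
Your proof is correct and follows essentially the same route as the paper: the paper's own justification is precisely the reduction to cautious monotonicity via parts 3--5 of Theorem~\ref{theo:cm-cut}, plus the identity $concise(D\cup\{(x,y)\})=concise(D)$ obtained from the fixed-point observation (an unsurprising new example leaves a concise subset concise) together with the uniqueness of Theorem~\ref{theo:unique-concise}. Your extra step verifying that $D\cup\{(x,y)\}$ remains coherent (so that $\inferscm$ and the uniqueness theorem even apply to the enlarged dataset) is a point the paper silently glosses over, and your argument for it, via Theorem~\ref{theo:nearest_neighbours} and the fixed-point characterisation, is a correct and worthwhile tightening rather than a deviation.
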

	
\section{Conclusion}
\label{sec:org500c49c}

In this paper we study {\wellbehaved} $\paacbr$ frameworks, and propose a new form of \aacbr, denoted $\caacbr$, which is cautiously monotonic, as well as, as a by-product, cumulative and rationally monotonic.
Given that $\paacbr$ admits the original \oaacbr\ \cite{DBLP:conf/kr/CyrasST16} as an instance, we have (implicitly) also  defined a cautiously monotonic version thereof.

(Some incarnations of) \aacbr\ have been shown successful empirically in a number of settings (see \cite{dear-2020}.  
The formal properties we have considered in this paper do not necessarily imply better empirical results at the tasks in which $\aacbr$ has been applied. We thus leave for future work an empirical comparison between $\paacbr$ and $\caacbr$. Other issues open for future work are comparisons \wrt\ learnability (such as model performance in the presence of noise), as well as a full complexity analysis of the new model. Also, 
we conjecture that the reduced size of the AF our method generates could possibly have advantages in terms of time and space complexity: we leave investigation of this issue to future work.

\section{Acknowledgements}
\label{sec:orgab5201e}
We are very grateful to Kristijonas Čyras for very valuable discussions, as well as to Alexandre Augusto Abreu Almeida, Victor Luis Barroso Nascimento and Matheus de Elias Muller for reviewing initial drafts of this paper.
The first author was supported by Capes (Brazil, Ph.D. Scholarship 88881.174481/2018-01).

\bibliographystyle{kr}
\end{document}